\newtheorem{proposition}{Proposition}
\newtheorem*{proposition*}{Proposition}
\newtheorem{corollary}{Corollary}
\newtheorem*{corollary*}{Corollary}
\theoremstyle{definition}
\newtheorem*{assumption*}{Assumption}
\newtheorem*{problem*}{Problem}
\newtheorem{problem}{Problem}
\theoremstyle{remark}
\newtheorem*{solution*}{Solution}
\newcommand{\prl}[1]{\left(#1\right)}
\newcommand{\scaleMathLine}[2][1]{\resizebox{#1\linewidth}{!}{$\displaystyle{#2}$}}
\newcommand{\NEW}[1]{{\color{black}#1}}
\def\thetitle{Autonomous Navigation in Unknown Environments using Sparse Kernel-based Occupancy Mapping}
\newcommand{\calA}{{\cal A}}
\newcommand{\calB}{{\cal B}}
\newcommand{\calD}{{\cal D}}
\newcommand{\calV}{{\cal V}}
\newcommand{\bfa}{\mathbf{a}}
\newcommand{\bfp}{\mathbf{p}}
\newcommand{\bfs}{\mathbf{s}}
\newcommand{\bfv}{\mathbf{v}}
\newcommand{\bfx}{\mathbf{x}}
\newcommand{\bfy}{\mathbf{y}}
\newcommand{\bfz}{\mathbf{z}}
\title{\LARGE \bf \thetitle%
\thanks{We gratefully acknowledge support from ARL DCIST CRA W911NF-17-2-0181 and ONR N00014-18-1-2828.}%
}
\author{Thai Duong \and Nikhil Das \and Michael Yip \and Nikolay Atanasov% <-this % stops a space
\thanks{The authors are with the Department of Electrical and Computer Engineering, University of California, San Diego, La Jolla, CA 92093 USA {\tt\small \{tduong, nrdas, yip, natanasov\}@ucsd.edu}}%
}
\begin{document}
\maketitle
\thispagestyle{empty}
\pagestyle{empty}

%%%%%%%%%%%%%%%%%%%%%%%%%%%%%%%%%%%%%%%%%%%%%%%%%%%%%%%%%%%%%%%%%%%%%%%%%%%%%%%%
\begin{abstract}
This paper focuses on real-time occupancy mapping and collision checking onboard an autonomous robot navigating in an unknown environment. We propose a new map representation, in which occupied and free space are separated by the decision boundary of a kernel perceptron classifier. We develop an online training algorithm that maintains a very sparse set of support vectors to represent obstacle boundaries in configuration space. We also derive conditions that allow complete (without sampling) collision-checking for piecewise-linear and piecewise-polynomial robot trajectories. We demonstrate the effectiveness of our mapping and collision checking algorithms for autonomous navigation of an Ackermann-drive robot in unknown environments.
\end{abstract}
%  \NEW{a point robot and}

% novel method for occupancy mapping, in which the boundary between occupied and free space is viewed as the decision boundary of a machine learning classifier. We use an online training approach for a kernel perceptron model, which maintains a very sparse set of support vectors to represent and store the environment boundaries efficiently. Our second contribution is the development of an efficient collision-checking algorithm for our sparse kernel-based map. The algorithm can completely (without sampling) and efficiently check linear and polynomial trajectories for collisions directly in configuration space. We demonstrate the effectiveness of our mapping and collision checking algorithms in tasks requiring autonomous navigation of an ackermann-drive robot in unknown environments. |TODO{Update with experiment results.}
% and can directly model the configuration space of a spherical robot
%Commonly used voxel and octree map representations can be easily maintained in small environments but they require significant memory in large environments. 

%%%%%%%%%%%%%%%%%%%%%%%%%%%%%%%%%%%%%%%%%%%%%%%%%%%%%%%%%%%%%%%%%%%%%%%%%%%%%%%%
\section{Introduction}
\label{sec:intro}

% kaess_isam, 
% cadena_slam_survey,

Autonomous navigation in robotics involves localization, mapping, motion planning, and control in a partially known environment perceived through streaming data from onboard sensors~\cite{human_friendly_nav_guzzi_icra13, safe_auto_nav_pavone_rss18}. In this paper, we focus on the mapping problem and, specifically, on enabling large-scale, yet compact, representations and efficient collision checking to support autonomous navigation. Existing work uses a variety of map representations based on voxels~\cite{thrun2005probabilistic,octomap,ieee_map_rep_standard,voxblox}, surfels~\cite{behley2018rss_surfel_slam}, geometric primitives~\cite{Kaess_infiniteplanes}, objects~\cite{Bowman_SemanticSLAM_ICRA17}, etc. 

We propose a novel mapping method that uses a kernel perceptron model to represent the occupied and free space of the environment. The model uses a set of support vectors to represent obstacle boundaries in configuration space. The complexity of this representation scales with the complexity of the obstacle boundaries rather than the environment size. We develop an online training algorithm to update the support vectors incrementally as new depth observations of the local surroundings are provided by the robot's sensors. To enable motion planning in the new occupancy representation, we develop an efficient collision checking algorithm for piecewise-linear and piecewise-polynomial trajectories in configuration space.

\textbf{Related Work.} Occupancy grid mapping is a commonly used approach for modeling the free and occupied space of an environment. The space is discretized into a collection of cells, whose occupancy probabilities are estimated online using the robot's sensory data. While early work~\cite{thrun2005probabilistic} assumes that the cells are independent, Gaussian process (GP) occupancy mapping~\cite{OCallaghan2012gaussian, wang2016fast, jadidi2017warped} uses a kernel function to capture the correlation among grid cells and predict the occupancy of unobserved cells. Online training of a Gaussian process model, however, does not scale well as its computational complexity grows cubically with the number of data points. Ramos et al.~\cite{ramos2016hilbertmap} improve on this by projecting the data points into Hilbert space and training a logistic regression model. Lopez and How~\cite{lopez2017aggressive} propose an efficient determinstic alternative, which builds a k-d tree from point clouds and queries the nearest obstacles for collision checking. Using spatial partitioning similar to a k-d tree, octree-based maps~\cite{octomap,chen2017improving} offer efficient map storage by performing octree compression. Meanwhile, AtomMap~\cite{fridovich2017atommap} stores a collection of spheres in a k-d tree as a way to avoid grid cell discretization of the map.

Navigation in an unknown environment, requires the safety of potential robot trajectory to be evaluated through a huge amount of collision checks with respect to the map representation~\cite{bialkowski2016efficient,luo2014empirical,hauser2015lazy}. Many works rely on sampling-based collision checking, simplifying the safety verification of continuous-time trajectories by evaluating only a finite set of samples along the trajectory~\cite{Tsardoulias2016planninggridmap, luo2014empirical}. This may be undesirable in safety critical applications. Bialkowski et al.~\cite{bialkowski2016efficient} propose an efficient collision checking method using safety certificates with respect to the nearest obstacles. Using a different perspective, learning-based collision checking methods~\cite{das2017fastron, pan2015efficient, huh2016learningGMM} sample data from the environment and train machine learning models to approximate the obstacle boundaries. Pan et al.~\cite{pan2015efficient} propose an incremental support vector machine model for pairs of obstacles but train the models offline. Closely related to our work, Das et al.~\cite{das2017fastron, das2019learning} develop an online training algorithm, called Fastron, to train a kernel perceptron collision classifier. To handle dynamic environments, Fastron actively resamples the environment and updates the model globally. Geometry-based collision checking methods, such as the Flexible Collision Library (FCL)~\cite{pan2012fcl}, are also related but rely on mesh representations of the environment which may be inefficient to generate from local observations.

%Inspired by GP mapping techniques, we utilize a radial basis function (RBF) kernel to capture occupancy correlations but focus on a compact representation of obstacle boundaries. 
%Building on the Fastron model, we develop an incremental kernel perceptron training algorithm that utilizes only local partial observations from a robot's onboard sensors. To improve the efficiency of online occupancy prediction, we store the support vectors in an $R^*$-tree data structure. 
%Finally, motivated by the safety certificates in~\cite{bialkowski2016efficient}, we derive efficient collision checking algorithms using nearest support vectors queried from the $R^*$-tree.
%Huh et al. \cite{huh2016learningGMM} instead train Gaussian mixture models but speficificcally for RRT planners \cite{lavalle1998rapidly}. 
Inspired by GP mapping techniques, we utilize a radial basis function (RBF) kernel to capture occupancy correlations but focus on a compact representation of obstacle boundaries using kernel perceptron. Furthermore, motivated by the safety certificates in~\cite{bialkowski2016efficient}, we derive our own safety guarantees for efficient collision checking algorithms.

\textbf{Contributions}. This paper introduces a sparse kernel-based mapping method that: 
\begin{itemize}[leftmargin=2em,nosep]
  \item represents continuous-space occupancy using a sparse set of support vectors stored in an $R^*$-tree data structure, scaling efficiently with the complexity of obstacle boundaries (Sec. \ref{subsec:ogm_with_fastron}),

  \item allows online map updates from streaming partial observations using our proposed incremental kernel perceptron training algorithm built on the Fastron model (Sec. \ref{subsec:ogm_with_fastron}), and

  \item provides efficient and complete (without sampling) collision checking for piecewise-linear and piecewise-polynomial trajectories with safety guarantees based on  nearest support vectors queried from the $R^*$-tree (Sec. \ref{subsec:collison_checking_with_fastron_map} and \ref{sec:auto_nav}).

\end{itemize}

\section{Problem Formulation}
\label{sec:problem_formulation}
Consider a spherical robot with center $\bfs \in \mathcal{S}:=[0,1]^d$ and radius $r \in \mathbb{R}_{>0}$ navigating in an unknown environment. Let $\mathcal{S}_{obs}$ and $\mathcal{S}_{free}$ be the obstacle space and free space in $\mathcal{S}$, respectively. In configuration space (C-space) $\mathcal{C}$, the robot body becomes a point $\bfs$, while the obstacle space and free space are transformed as $\mathcal{C}_{obs} = \cup_{\bfx\in \mathcal{S}_{obs}} \mathcal{B}(\bfx,r)$, \NEW{where $\mathcal{B}(\bfx,r) = \{x'\in \mathcal{S}: \|x-x'\|_2 \leq r\}$,} and $\mathcal{C}_{free} = \mathcal{S} \setminus \mathcal{C}_{obs}$.
% so that $\mathcal{C} = \mathcal{C}_{obs} \cup \mathcal{C}_{free} = \mathcal{S}$.
 Assume that the robot position $\bfs_{t_k} \in \mathcal{C}$ at time $t_k$ is known or provided by a localization algorithm. Let $\bfs_{t_{k+1}} = f(\bfs_{t_k}, \bfa_{k})$ characterize the robot dynamics for an action $\bfa_{k} \in \mathcal{A}$. Applying $\bfa_{k}$ at $\bfs_{t_k}$ also incurs a motion cost $c(\bfs_{t_k}, \bfa_{k})$ (e.g., distance
or energy).
%(e.g., distance or energy).
 The robot is equipped with a depth sensor that provides distance measurements $\bfz_{t_k}$ to the obstacle space $\mathcal{S}_{obs}$ within its field of view. Our objective is to construct an occupancy map $\hat{m}_{t_k}: \mathcal{C} \rightarrow \{-1, 1\}$ of the C-space based on accumulated observations $\bfz_{t_{0:k}}$, where ``-1" and ``1" mean ``free" and ``occupied", respectively. Assuming unobserved regions are free, we rely on $\hat{m}_{t_k}$ to plan a robot trajectory to a goal region $\mathcal{C}_{goal} \subset \mathcal{C}_{free}$. As the robot is navigating, new sensor data is used to update the map and recompute the motion plan. In this online setting, the map update, $\hat{m}_{t_{k+1}} = g(\hat{m}_{t_k}, \bfz_{t_k})$, is a function of the previous estimate $\hat{m}_{t_k}$ and a newly received depth observation~$\bfz_{t_k}$.

\begin{problem}
\label{problem_formulation_unknown_env}
Given a start state $\bfs_0 \in \mathcal{C}_{free}$ and a goal region $\mathcal{C}_{goal} \subset \mathcal{C}_{free}$, find a sequence of actions that leads the robot to $\mathcal{C}_{goal}$ safely, while minimizing the motion cost:
\begin{align}
\label{problem_formulation_unknown_env_equation}
\min_{N, \bfa_0, \ldots, \bfa_N} \;&\sum_{k=0}^{N-1}  c(\bfs_{t_k}, \bfa_{k})\\
\text{s.t.} \quad\; &\bfs_{t_{k+1}} = f(\bfs_{t_k}, \bfa_{k}), \hat{m}_{t_{k+1}} = g(\hat{m}_{t_k}, \bfz_{t_k}), \notag\\
&\;\bfs_{t_N} \in \mathcal{C}_{goal}, \;\hat{m}_{t_k}(\bfs_{t_k}) = - 1,\;k= 0,\ldots,N.\notag
\end{align}
\end{problem}

\section{Preliminaries}
\label{sec:prelim}
\NEW{In this section, we provide a summary on kernel perceptron and Fastron which is useful for our derivations in the next sections.} 
%\NEW{We provide preliminaries useful for our method}. 
The \emph{kernel perceptron} model is used to classify a set of $N$ labeled data points. For $l = 1, \ldots, N$, a data point $\bfx_l$ with label $\bfy_l \in \{-1, 1\}$ is assigned a weight $\alpha_l \in \mathbb{R}$. Training determines a set of $M^+$ positive support vectors and their weights $\Lambda^+ = \{(\bfx_i, \alpha_i)\}$ and a set of $M^-$ negative support vectors and their weights $\Lambda^- = \{(\bfx_j^-, \alpha_j^-)\}$. The decision boundary is represented by a score function, 
%\vspace{-2mm}
\begin{equation}
\label{eq:fastron_score}
F(\bfx) =\sum_{i = 1}^{M^+} \alpha_i^+ k(\bfx_i^+, \bfx) - \sum_{j = 1}^{M^-} \alpha_j^- k(\bfx_j^-, \bfx),
%\vspace{-3mm}
\end{equation}
where $k(\cdot, \cdot)$ is a kernel function and $\alpha_j^-, \alpha_i^+ > 0$. The sign of $F(\bfx)$ is used to predict the class of a test point $\bfx$. 
%\NEW{As a result, kernel perceptron does not provide occupancy probability}. %During training, if $x_l$ is misclassified by the function $F(x)$, its weight $\alpha_l$ is updated by adding~$1$.

%\textbf{Fastron}. 
\emph{Fastron}~\cite{das2017fastron, das2019learning} is an efficient training algorithm for the kernel perceptron model. It prioritizes updating misclassified points based on their margins instead of random selection as in the original kernel perceptron training. Our previous work~\cite{das2017fastron, das2019learning} shows that if $\alpha_l = \xi y_l - (\sum_{i\neq l} \alpha^+_i k(\bfx^+_i, \bfx_l) - \sum_{j\neq l} \alpha^-_j k(\bfx^-_j, \bfx_l))$ for some $\xi > 0$, then $\bfx_l$ is correctly classified with label $y_l$. Based on this fact, Fastron utilizes one-step weight correction ${\Delta \alpha = \xi y_l - (\sum \alpha^+_i k(\bfx^+_i, \bfx_l) - \sum \alpha^-_j k(\bfx^-_j, \bfx_l))}$ where $\xi = \xi^+$ if $y_l = 1$ and $\xi = \xi^-$ if $y_l = -1$.

\section{Sparse Kernel-based Occupancy Mapping}
\label{subsec:ogm_with_fastron}
 \addtolength{\textfloatsep}{-7mm}
 %\addtolength{\floatsep}{-2mm}
\begin{figure*}[h]
\centering
\begin{subfigure}[t]{0.25\textwidth}
        \centering
        \includegraphics[width=\textwidth]{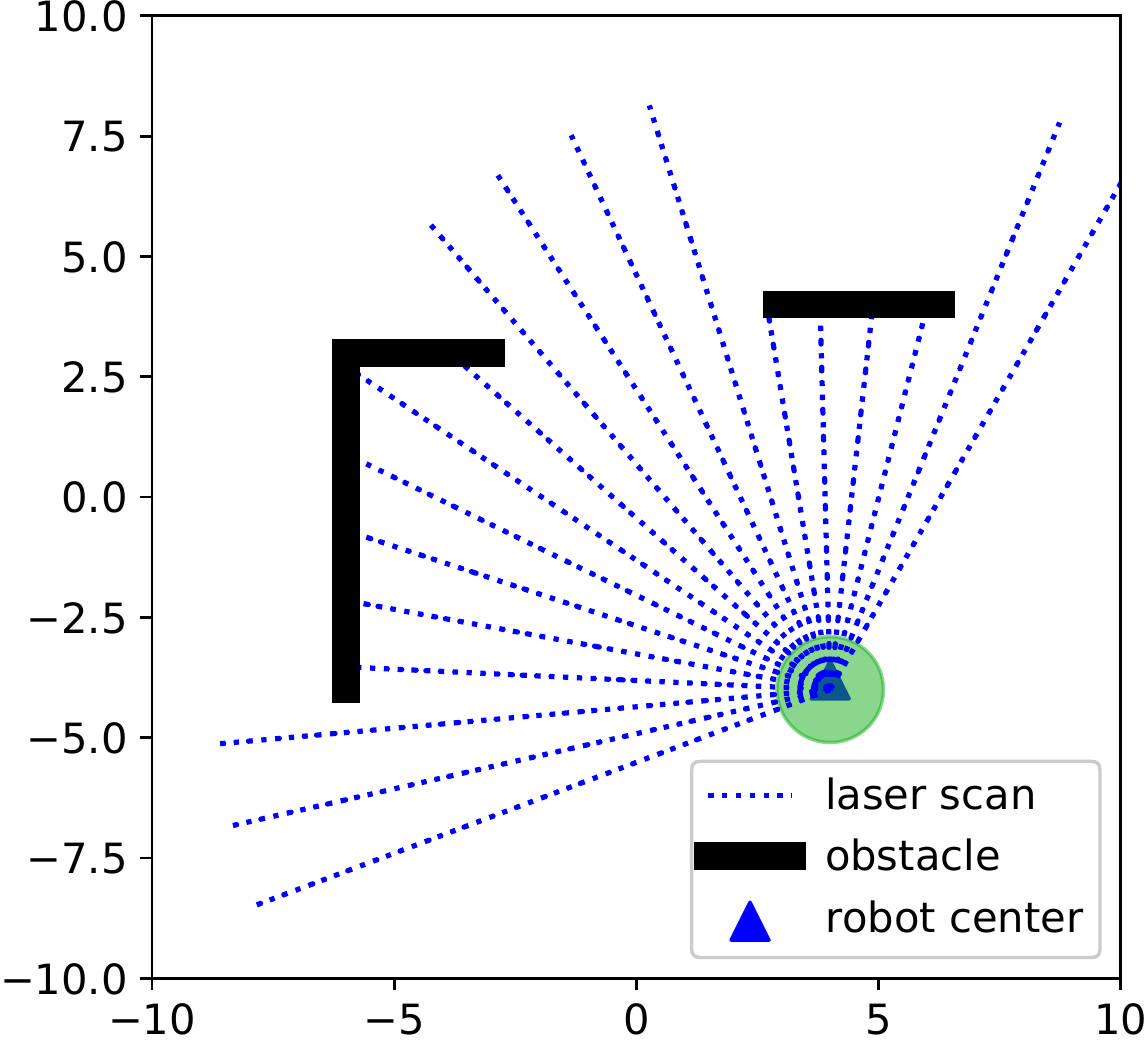}
        \vspace{-6mm}
        \caption{}
        \label{fig:laser_scan_ws}
\end{subfigure}%
\begin{subfigure}[t]{0.25\textwidth}
        \centering
        \includegraphics[width=\textwidth]{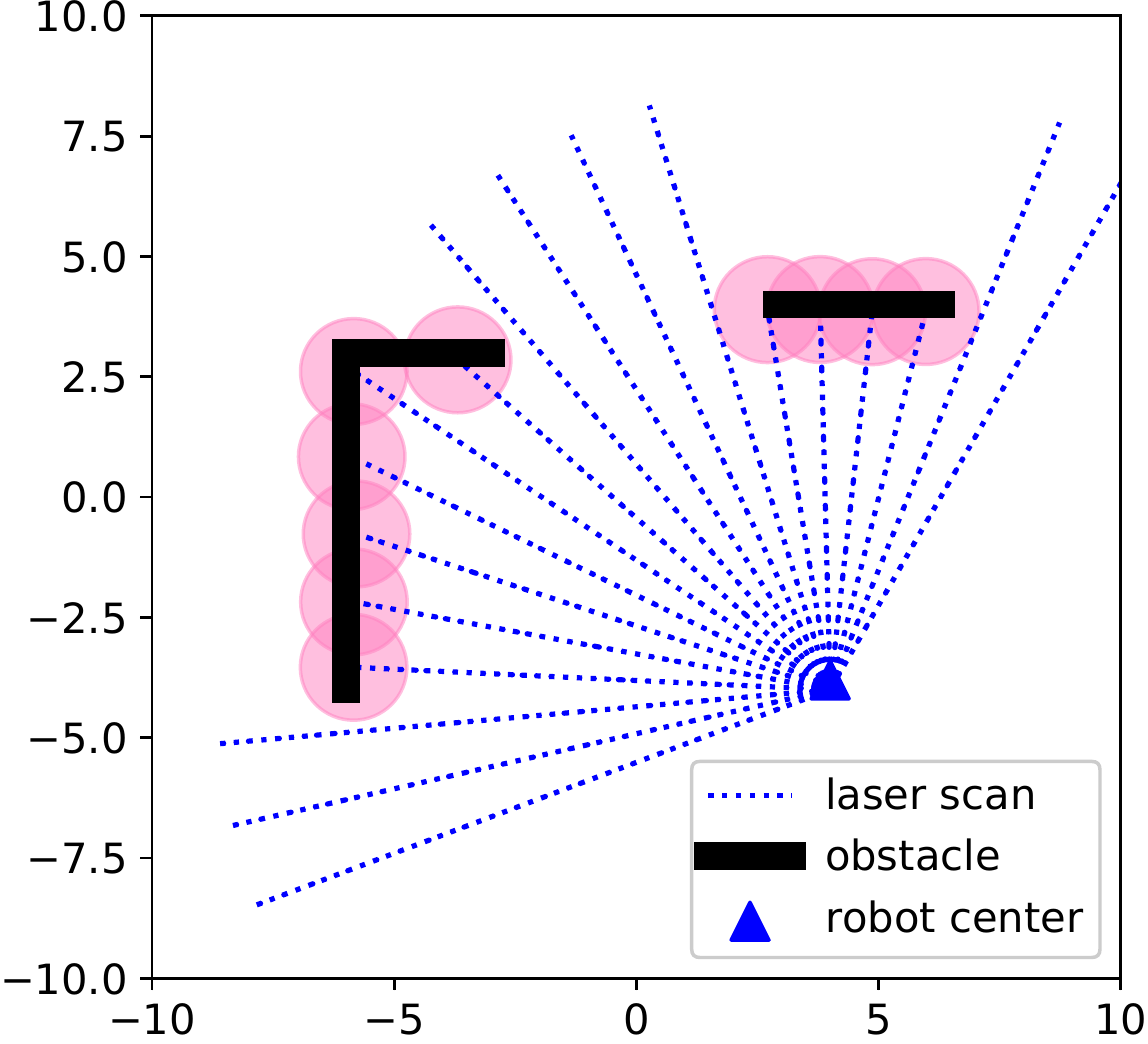}
        \vspace{-6mm}
        \caption{}
        \label{fig:laser_scan_cs}
\end{subfigure}%
\begin{subfigure}[t]{0.25\textwidth}
        \centering
        \includegraphics[width=\textwidth]{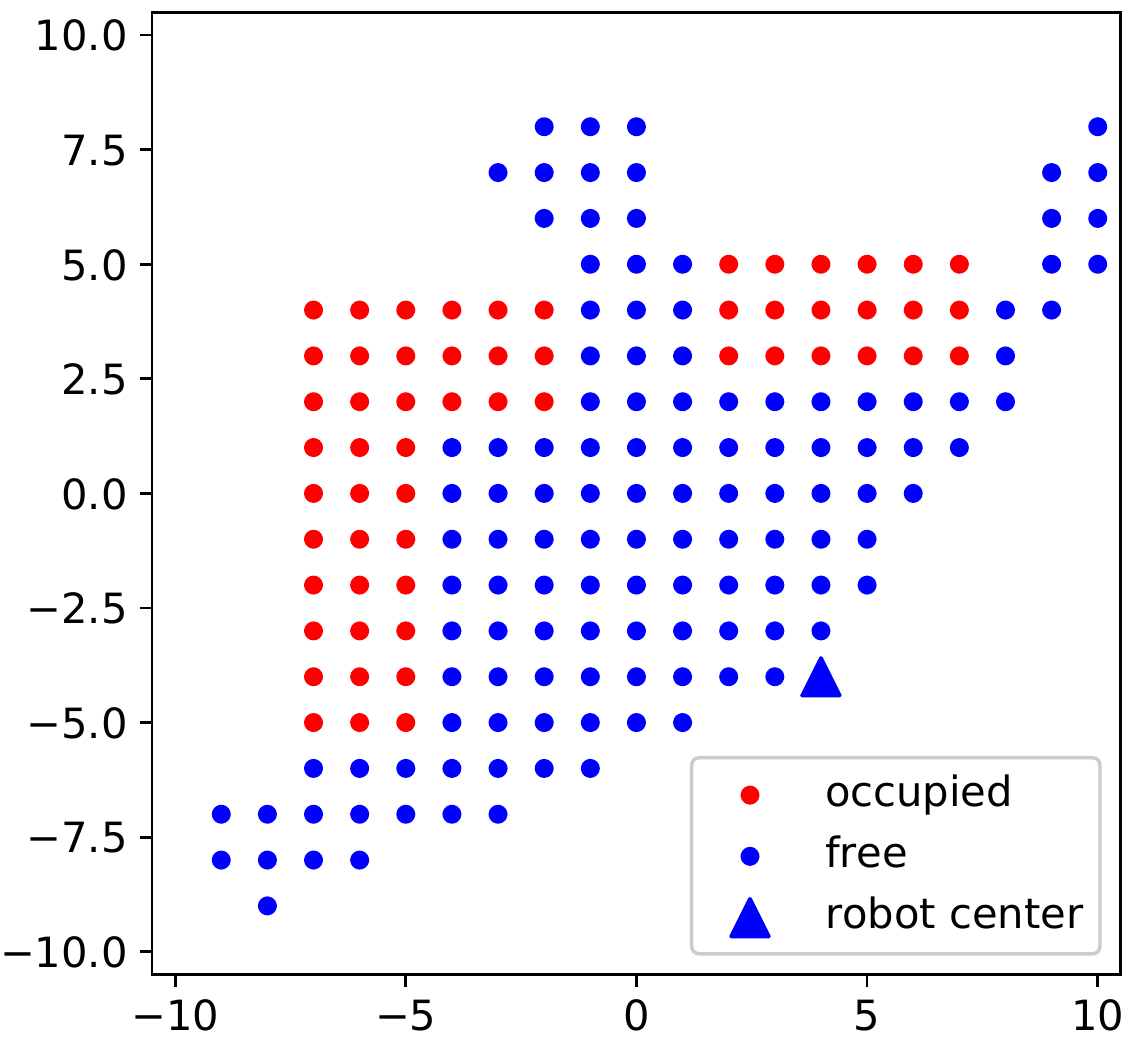}
        \vspace{-6mm}
        \caption{}
        \label{fig:orig_data}
\end{subfigure}%
\begin{subfigure}[t]{0.25\textwidth}
        \centering
        \includegraphics[width=\textwidth]{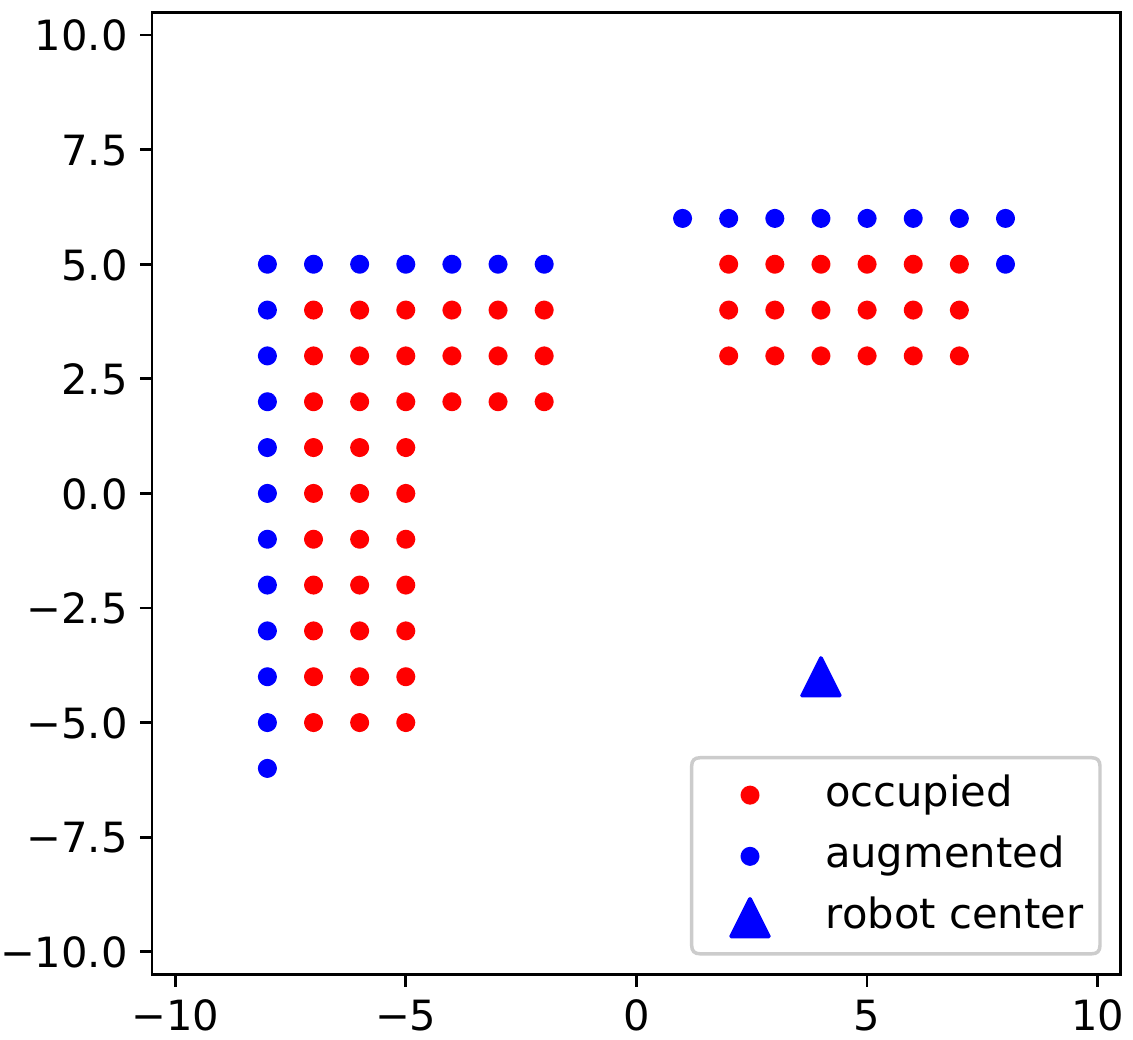}
        \vspace{-6mm}
        \caption{}
        \label{fig:augmented_data}
\end{subfigure}%

\begin{subfigure}[t]{0.25\textwidth}
        \centering
        \includegraphics[width=\textwidth]{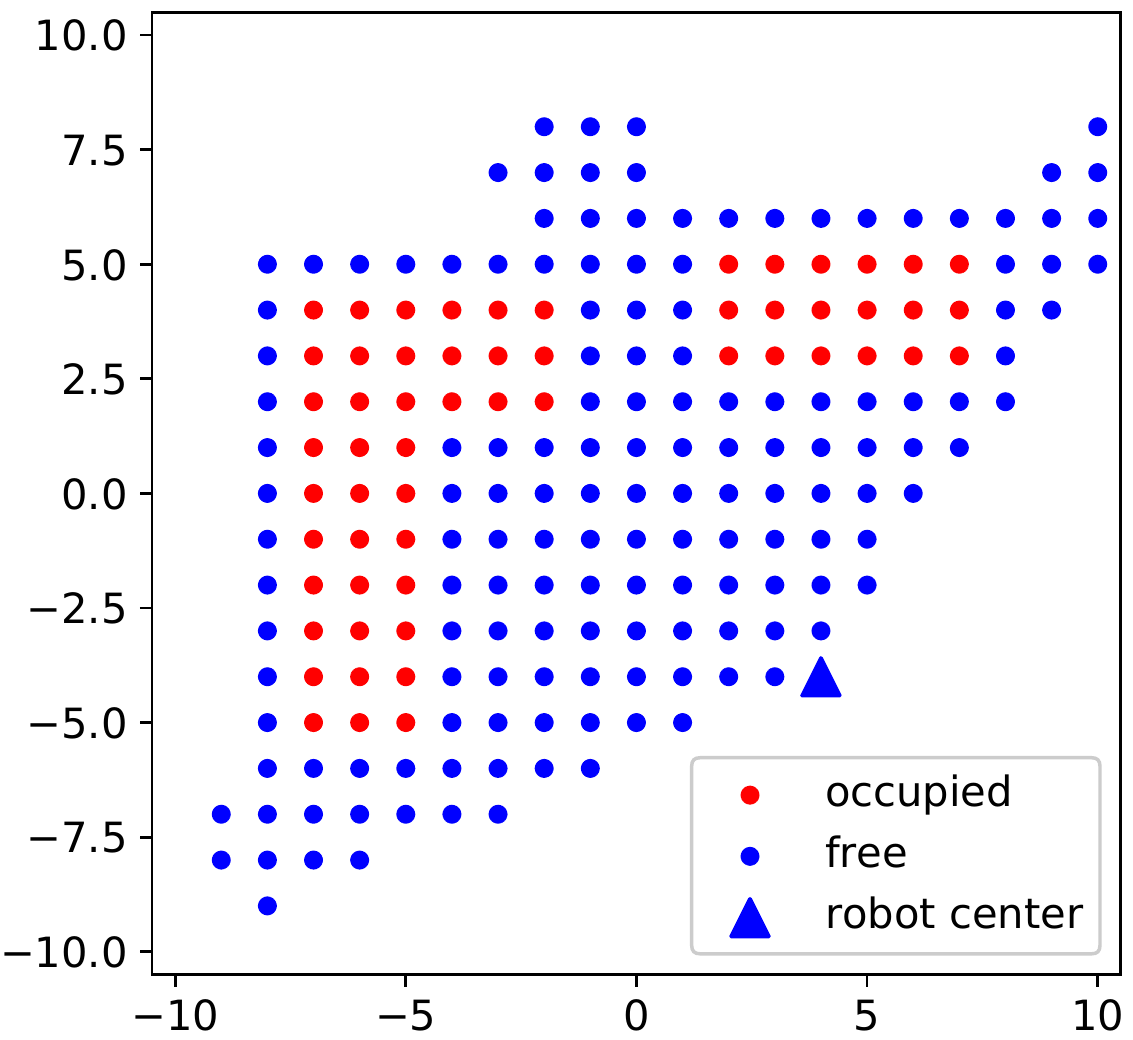}
        \vspace{-6mm}
        \caption{}
        \label{fig:training_data}
\end{subfigure}%
\begin{subfigure}[t]{0.25\textwidth}
        \centering
\includegraphics[width=\textwidth]{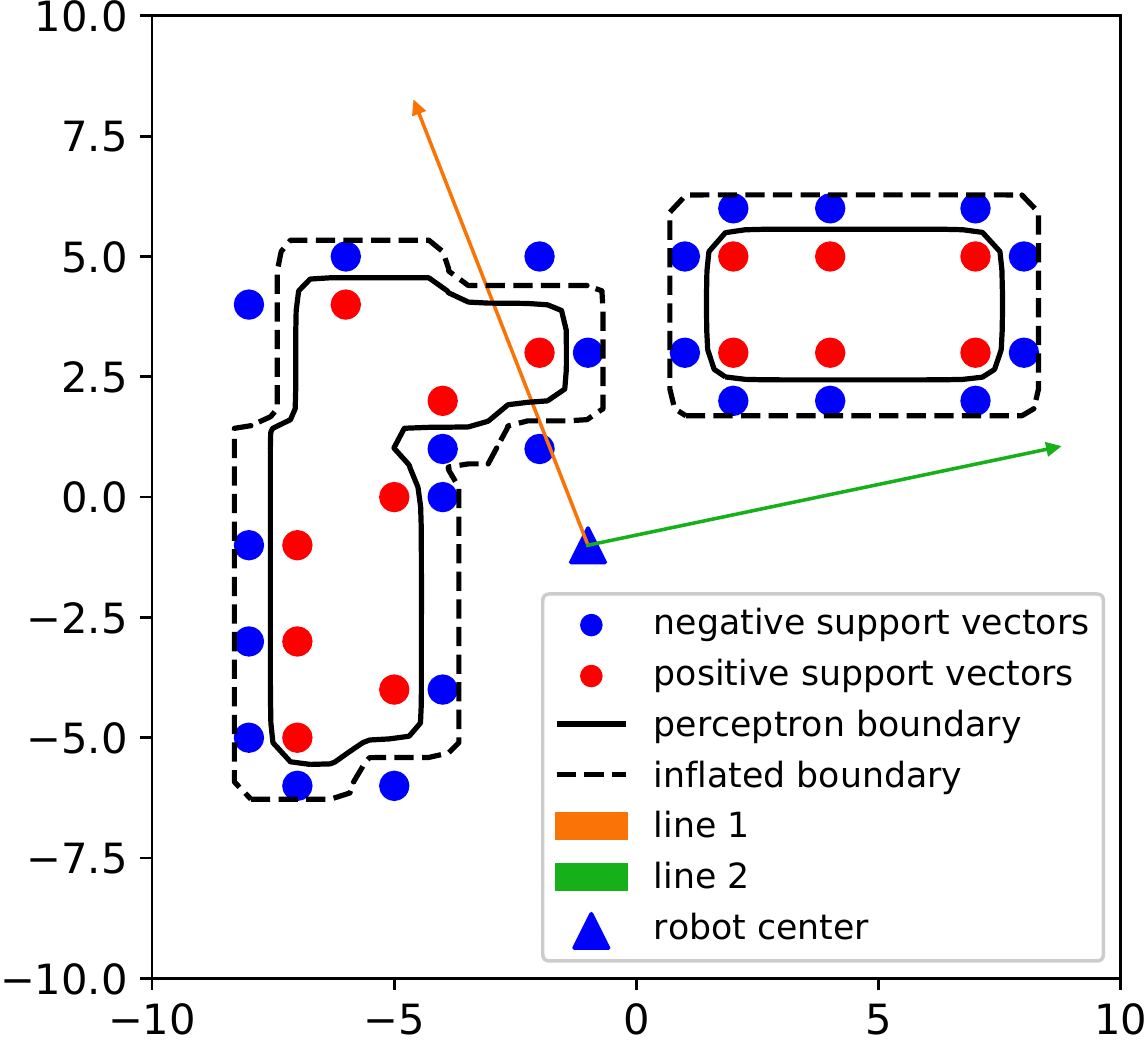}%
\vspace{-2mm}
        \caption{}
        \label{fig:boundary_example}
\end{subfigure}%
\begin{subfigure}[t]{0.25\textwidth}
        \centering
\includegraphics[width=0.97\textwidth]{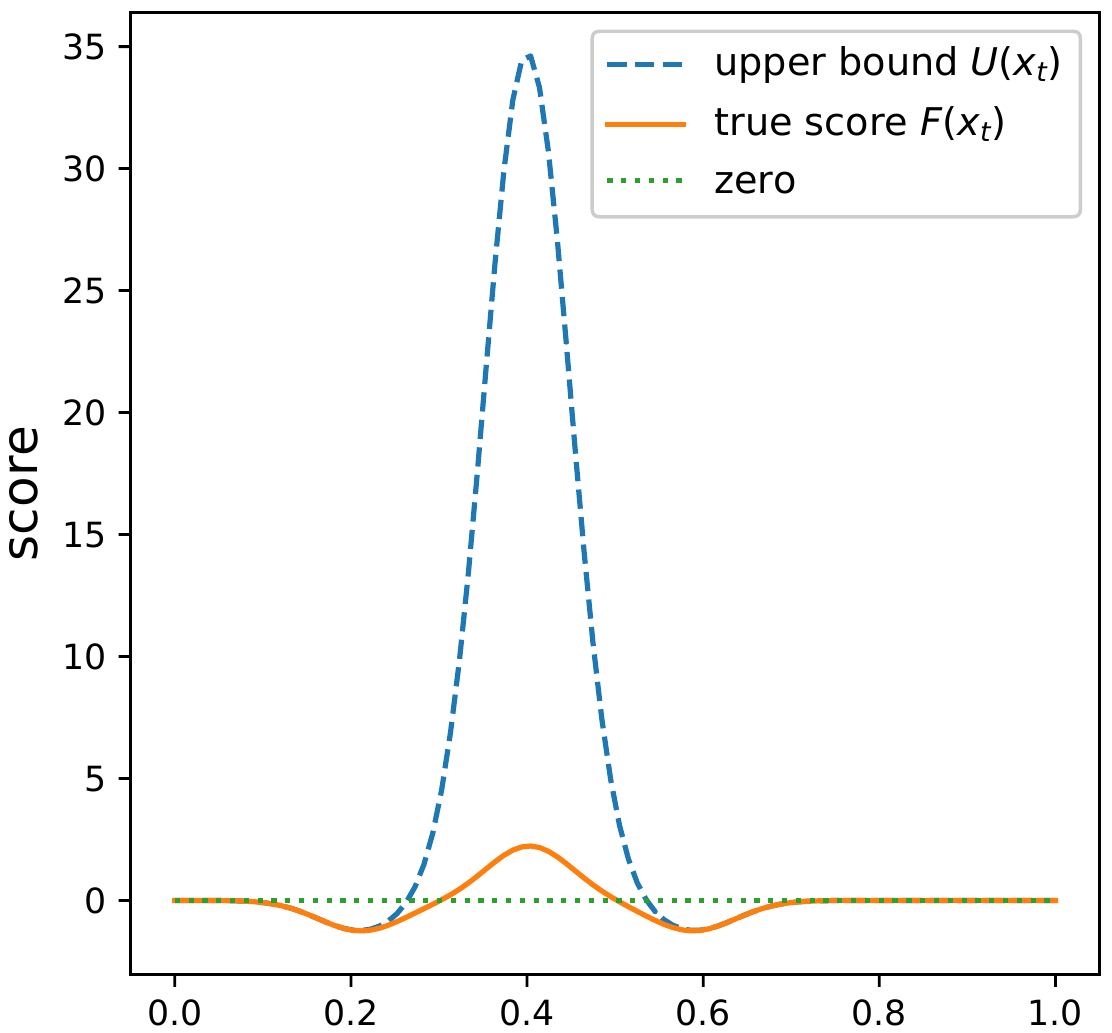}%
\vspace{-2mm}
        \caption{}
        \label{fig:upperbound_line1}
\end{subfigure}%
\begin{subfigure}[t]{0.25\textwidth}
        \centering
\includegraphics[width=1.01\textwidth]{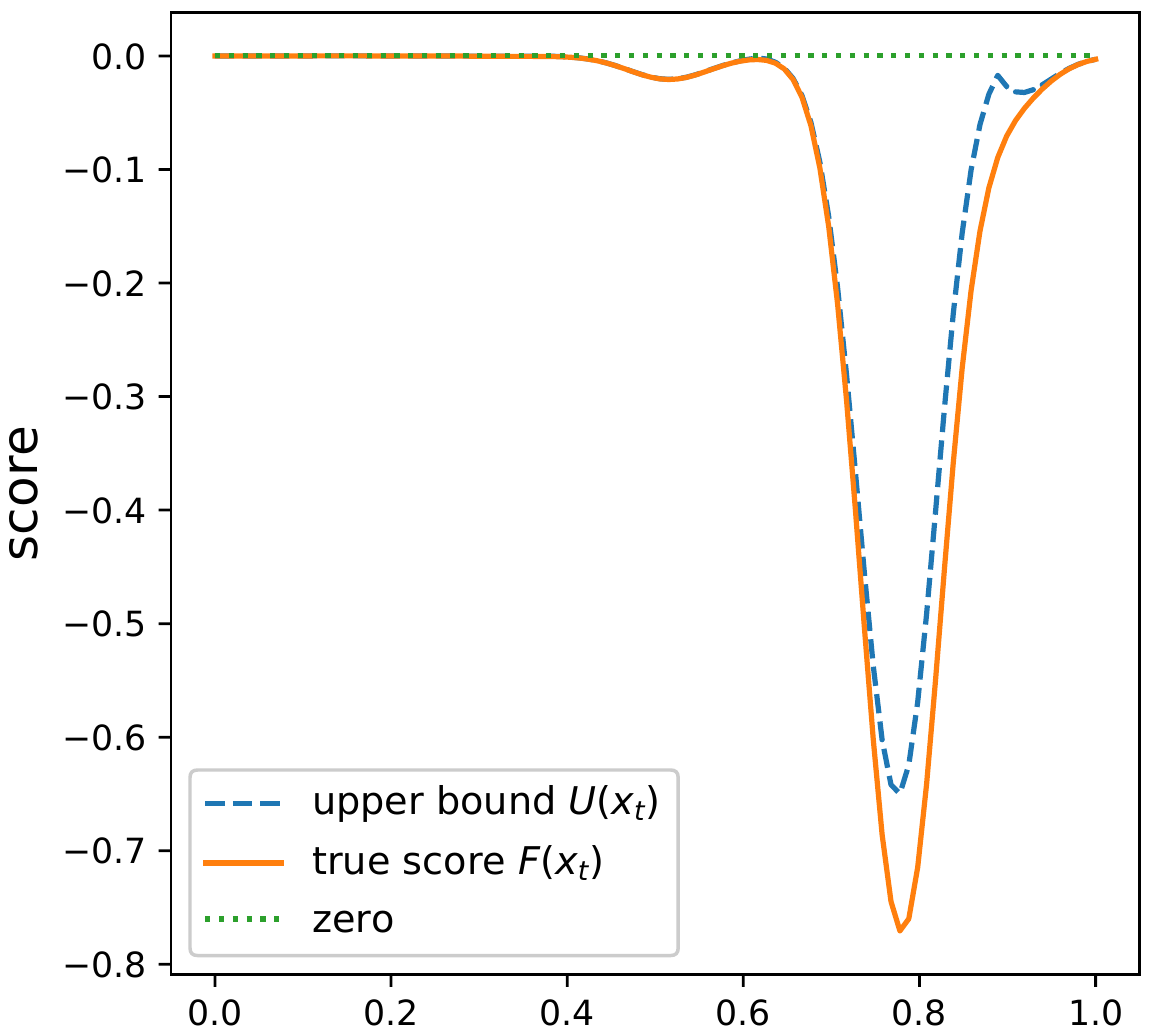}
\vspace{-6mm}
        \caption{}
        \label{fig:upperbound_line2}
\end{subfigure}%
\caption{Example of our mapping method: (a) scan in work space; (b) scan in C-space; (c) samples from scan; (d) augmented free points (e) training data $\mathcal{D}$ from one lidar scan; (f) the exact decision boundary generated by the score $F(\bfx)$ and the inflated boundary generated by the upper bound $U(\bfx)$; $F(\bfx)$ and $U(\bfx)$ along two rays: (g) one that enters the occupied space and (h) one that remains obstacle-free.}
\vspace{-6mm}
\label{fig:mapping_method}
\end{figure*}
We propose a new version of the Fastron algorithm, utilizing only local streaming data, to achieve real-time sparse kernel-based occupancy mapping. For online learning, Fastron resamples a \textit{global} training dataset around the current support vectors and updates the support vectors. In our setting, only \textit{local} data in the robot's vicinity is available from the onboard sensors. We propose an incremental version of Fastron in Alg.~\ref{alg:fastron_model} such that: 1) training is performed with \textit{local} data $\mathcal{D}_t$ generated from a depth measurement $\bfz_t$ at a time $t$ and 2) the support vectors are stored in an $R^*$-tree data structure, enabling efficient score function~\eqref{eq:fastron_score} computations.

%\NEW{For online learning, Fastron actively resamples around the current support vectors to generate a dataset $\mathcal{D}$ and globally updates the support vectors. In our settings, only streaming partial depth observations are available. To be able to use Fastron to map the environment from local observations, we propose an adapted Fastron algorithm (Algorithm \ref{alg:fastron_model}) with the following changes: 1) we generate training data $\mathcal{D}$ from the observations at a time $t$ and update the support vectors locally; 2) to improve map update time, we store the support vectors in an $R^*$-tree data structure and approximate the score in Eq. \eqref{eq:fastron_score} (lines 2 \& 3 in Algorithm \ref{alg:fastron_model}) using $K^+, K^-$ nearest positive and negative support vectors, respectively.}

\begin{algorithm}[t!]
\caption{Incremental Fastron Training with Local Data}
\label{alg:fastron_model}
  \footnotesize
	\begin{algorithmic}[1]
		\Require Sets $\Lambda^+ = \{(\bfx^+_i, \alpha^+_i)\}$ and $\Lambda^- = \{(\bfx_j^-, \alpha_j^-)\}$ of $M^+$ positive and $M^-$ negative support vectors stored in an $R^*$-tree; Local dataset $\mathcal{D} = \{(\bfp_l, q_l)\}$ generated from location $\bfs_t$; $\xi^+, \xi^- > 0$; $N_{max}$
		\Ensure Updated $\Lambda^+, \Lambda^-$.
		\State \NEW{Get $K^+, K^-$ nearest negative and positive support vectors.}
		\For {$(\bfp_l, q_l)$ in $\mathcal{D}$}
			\State Calculate $F_l = \sum_{i = 1}^{K^+} \alpha^+_i k(\bfx^+_i,\bfp_l) - \sum_{j = 1}^{K^-} \alpha^-_j k(\bfx^-_j,\bfp_l)$
		\EndFor
		\For {$t = 1$ to $N_{max}$}
			\If{$q_lF_l > 0 \quad \forall l$} \Return{$\Lambda^+, \Lambda^-$} \Comment{Margin-based priotization}
			\EndIf
			\State $m = \text{argmin}_l q_l F_l$	
			\If{$q_m > 0$}  $\Delta\alpha = \xi^+q_m -F_m$ \Comment{One-step weight correction}
			\Else $\quad\Delta\alpha = \xi^-q_m -F_m$
			\EndIf
			\If{$\exists(\bfp_m, \alpha^+_m) \in \Lambda^+$}
			$\alpha^+_m \text{+=} \Delta \alpha$, $F_l \text{+=} k(\bfp_l, \bfp_m) \Delta\alpha, \forall l$
			\ElsIf{$\exists (\bfp_m, \alpha^-_m) \in \Lambda^-$}
			$\alpha^-_m \text{-=} \Delta \alpha$, $F_l \text{-=} k(\bfp_l, \bfp_m) \Delta\alpha, \forall l$
				\ElsIf{$q_m > 0$} $\alpha^+_m = \Delta \alpha$, $\Lambda^+ = \Lambda^+ \cup \{(\bfp_m, \alpha^+_m)\}$
					\Else{$\quad\alpha^-_m = -\Delta \alpha$, $\Lambda^- = \Lambda^- \cup \{(\bfp_m, \alpha^-_m)\}$}
			\EndIf
			\For{$(\bfp_l, q_l) \in \mathcal{D}$} \Comment{Remove redundant support vectors}
				\If{$\exists (\bfp_l, \alpha^+_l) \in \Lambda^+$ and $q_l(F_l - \alpha^+_l) > 0$}
				\State $\Lambda^+ \!= \Lambda^+ \setminus \{(\bfp_l, \alpha^+_l)\}, F_n \text{-=}\; k(\bfp_l, \bfp_n)\alpha^+_l, \forall (\bfp_n, \cdot) \in \mathcal{D}$
				\EndIf
				\If{$\exists (\bfp_l, \alpha^-_l) \in \Lambda^-$ and $q_l(F_l + \alpha^-_l) > 0$}
				\State $\Lambda^- \!= \Lambda^- \setminus \{(\bfp_l, \alpha^-_l)\}, F_n \text{+=}\; k(\bfp_l, \bfp_n)\alpha^-_l\!, \forall (\bfp_n, \cdot) \!\in \mathcal{D}$
				\EndIf
			\EndFor
		\EndFor
		\State \Return{$\Lambda^+, \Lambda^-$}
	\end{algorithmic}
\end{algorithm}
% Given a depth measurement $\bfz_t$ (see Fig.~\ref{fig:laser_scan_ws})
\iffalse
\begin{algorithm}[t]
\caption{Online Training Data Generation}
\label{alg:augmented_data}
  \footnotesize
	\begin{algorithmic}[1]
		\Require Depth measurement $\bfz_t$; Sets $\Lambda^+ = \{(\bfx^+_i, \alpha^+_i)\}$ and $\Lambda^- = \{(\bfx_j^-, \alpha_j^-)\}$ of $M^+$ positive and $M^-$ negative support vectors; Grid $\bar{\mathcal{C}}$ over C-space; Neighbor threshold $\delta$.
		\State Generate $\bar{\mathcal{D}}_t = \{(\bfp_i, q_i)\}$ with $\bfp_i \in \bar{\mathcal{C}}$ and $q_i \in \{-1, 1\}$ from $\bfz_t$
		\For {$(\bfp_i,q_i)$ in $\bar{\mathcal{D}}_t$ }
			\If {$q_i =-1$}  \textbf{continue}
			\Else {} Find neighboring grid points $\calB_i = \{\bfp \in \bar{\mathcal{C}}: \| \bfp - \bfp_i\|_\infty < \delta\}$
			\EndIf
			\For {$\bfp'$ in $\calB_i$}
				\If {$(\bfp',\cdot) \notin \bar{\calD}_t$ and $(\bfp', \cdot) \notin \Lambda^+$ and $(\bfp', \cdot) \notin \Lambda^-$}
					\State $\tilde{\calD}_t \gets \tilde{\calD}_t \cup \{(\bfp',-1)\}$
				\EndIf
			\EndFor
		\EndFor
		\State \Return $\mathcal{D}_t = \bar{\mathcal{D}}_t \cup \tilde{\calD}_t$
	\end{algorithmic}
\end{algorithm}
\fi
\textbf{Data Generation}. Fig.~\ref{fig:laser_scan_ws} illustrates a lidar scan $\bfz_t$ obtained by the robot at time $t$. In configuration space, each laser ray end point corresponds to a ball-shaped obstacle, while the robot body becomes a point as shown in Fig.~\ref{fig:laser_scan_cs}. To generate local training data $\mathcal{D}_t$, the occupied and free C-space areas observed by the lidar are sampled (e.g., on a regular grid). As shown in Fig.~\ref{fig:orig_data}, this generates a set $\bar{\mathcal{D}}_t$ of points with label ``1'' (occupied) in the ball-shaped occupied areas and with label ``-1'' (free), outside. As unobserved areas are assumed free, neighboring points to the occupied samples in $\bar{\mathcal{D}}_t$ that are not already in $\bar{\mathcal{D}}_t$ or in the support vectors are added to an augmented set $\tilde{\mathcal{D}}_t$ with label ``-1''.    The augmented dataset $\tilde{\mathcal{D}}_t$ is illustrated in Fig.~\ref{fig:augmented_data} assuming the set of support vectors is empty. The local data $\mathcal{D}_t = \bar{\mathcal{D}}_t \cup \tilde{\mathcal{D}}_t$ (Fig.~\ref{fig:training_data}) is used in our Incremental Fastron Algorithm to update the support vectors (Fig.~\ref{fig:boundary_example}). Storing the sets of support vectors $\Lambda^+$, $\Lambda^-$ over time requires significantly less memory than storing the training data $\cup_t \mathcal{D}_t$. The occupancy of a query point $\bfx$ can be estimated from the support vectors by evaluating the score function $F(\bfx)$ in Eq.~\eqref{eq:fastron_score}. Specifically, $\hat{m}_t(\bfx) = -1$ if $F(\bfx) < 0$ and $\hat{m}_t(\bfx) = 1$ if $F(\bfx) \geq 0$. Fig.~\ref{fig:boundary_example} illustrates the boundaries generated by Alg.~\ref{alg:fastron_model}.

\textbf{Score Approximation}. As the robot explores the environment, the number of support vectors required to represent the obstacle boundaries increases. Since the score function~\eqref{eq:fastron_score} depends on all support vectors, the time to train the kernel perceptron model online would increase as well. We propose an approximation to the score function $F(\bfx)$ under the assumption that the kernel function $k(\bfx_i, \bfx_j)$ is \emph{radial} (depends only on $\|\bfx_i-\bfx_j\|$) and \emph{monotone} (its value decreases as $\|\bfx_i-\bfx_j\|$ increases). To keep the presentation specific, we make the following assumption in the remainder of the paper.
\begin{assumption*}
$k(\bfx_i, \bfx_j) := \eta \exp\prl{-\gamma\|\bfx_i - \bfx_j\|^2}$
\end{assumption*}
% A radial basis function kernel 
%  is used in Alg.~\ref{alg:fastron_model}.
The kernel parameters $\eta, \gamma \in \mathbb{R}_{>0}$ can be optimized offline via automatic relevance determination~\cite{neal2012bayesian} using training data from known occupancy maps. 
%Our results also hold for an RBF kernel with elliptical level sets but we keep the form above to simplify the presentation. 
The assumption implies that the value of $F(\bfx)$ is not affected significantly by support vectors far from $\bfx$. We introduce an $R^*$-tree data structure constructed from the sets of positive and negative support vectors $\Lambda^+$, $\Lambda^-$ to allow efficient $K$ nearest-neighbor lookup. The $K$ nearest support vectors, consisting of $K^+$ and $K^-$ positive and negative support vectors, are used to approximate the score $F(\bfx)$ (Lines 1-3 in Alg.~\ref{alg:fastron_model}).

\section{Collision Checking with Kernel-based Maps}
\label{subsec:collison_checking_with_fastron_map}

A map representation is useful for navigation only if it allows checking a potential robot trajectory $\bfs(t)$ over time $t$ for collisions.
%Sampling-based methods only check a set of discrete samples on the trajectory \cite{bialkowski2016efficient} and, hence, do not provide safety guarantees. 
We derive conditions for complete (without sampling) collision-checking of continuous-time trajectories $\bfs(t)$ in our sparse kernel-based occupancy map representation. Checking that a curve $\bfs(t)$ is collision-free is equivalent to verifying that $F(\bfs(t)) < 0$, $\forall t \geq 0$. It is not possible to express this condition for $t$ explicitly due to the nonlinearity of $F$. Instead, in Prop.~\ref{prop:score_bounds}, we show that an accurate upper bound $U(\bfs(t))$ of the score $F(\bfs(t))$ exists and can be used~to evaluate the condition $U(\bfs(t)) < 0$ explicitly in terms of $t$.
\begin{proposition}
%\vspace{-2mm}
\label{prop:score_bounds}
For any $(\bfx_j^-, \alpha_j^-)\in \Lambda^-$, the score $F(\bfx)$ is bounded above by $U(\bfx)=k(\bfx, \bfx_*^+) \sum_{i = 1} ^ {M^+} \alpha^+_i \!-\! k(\bfx, \bfx_j^-)\alpha^-_j$ where $\bfx_{*}^{+}$ is the closest positive support vector to $\bfx$.
\end{proposition}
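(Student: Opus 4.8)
The plan is to bound the positive and the negative parts of $F(\bfx)$ separately, invoking the two properties of the kernel posited earlier in the paper: that $k$ is \emph{radial} (hence symmetric, so $k(\bfx,\bfx_*^+)=k(\bfx_*^+,\bfx)$) and \emph{monotone} decreasing in $\|\cdot\|$, together with strict positivity of the Gaussian kernel and of all weights $\alpha_i^+,\alpha_j^->0$.

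First I would treat the positive sum $\sum_{i=1}^{M^+}\alpha_i^+ k(\bfx_i^+,\bfx)$. Since $\bfx_*^+$ is by definition the closest positive support vector to $\bfx$, we have $\|\bfx-\bfx_*^+\|\le\|\bfx-\bfx_i^+\|$ for every $i$, and monotonicity of $k$ then yields $k(\bfx_i^+,\bfx)\le k(\bfx_*^+,\bfx)$. Because each weight $\alpha_i^+>0$, multiplying preserves the inequality, so summing over $i$ gives $\sum_{i=1}^{M^+}\alpha_i^+ k(\bfx_i^+,\bfx)\le k(\bfx,\bfx_*^+)\sum_{i=1}^{M^+}\alpha_i^+$.

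Next, for the negative sum I would exploit positivity of the kernel. Each term $\alpha_j^- k(\bfx_j^-,\bfx)$ is nonnegative because $\alpha_j^->0$ and $k>0$, so discarding all but a single index $j$ can only decrease the sum: $\sum_{j=1}^{M^-}\alpha_j^- k(\bfx_j^-,\bfx)\ge \alpha_j^- k(\bfx_j^-,\bfx)$. Negating flips this to $-\sum_{j=1}^{M^-}\alpha_j^- k(\bfx_j^-,\bfx)\le -\alpha_j^- k(\bfx_j^-,\bfx)$. Adding the two bounds gives exactly $F(\bfx)\le U(\bfx)$, and since $j$ was arbitrary the claim holds for any $(\bfx_j^-,\alpha_j^-)\in\Lambda^-$.

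There is no serious obstacle here; the argument is a direct term-by-term comparison. The only point requiring care is making the two bounds reinforce each other: the positive mass is concentrated onto its nearest support vector to obtain an upper bound, whereas the negative mass is thinned down to a single retained term so that the subtracted quantity stays as large as possible. Strict monotonicity is not needed, so the same reasoning would apply to any radial, monotone, positive kernel; the Gaussian assumption merely fixes a concrete instance.
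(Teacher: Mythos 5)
Your proof is correct and is essentially the paper's own argument, spelled out in more detail: bound the positive sum via $k(\bfx,\bfx_i^+)\le k(\bfx,\bfx_*^+)$ using monotonicity of the radial kernel and positivity of the weights, and bound the negative sum below by a single retained term $\alpha_j^- k(\bfx,\bfx_j^-)$ using nonnegativity. No gap or divergence from the paper's proof.
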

\begin{proof}
\vspace{-1mm}
The proposition holds because $k(\bfx, \bfx_i^+) \leq k(\bfx, \bfx_*^+)$, $\forall \bfx_i^+$ and $\sum_{j = 1}^{M^-} \alpha_j^- k(\bfx, \bfx_j^-) \geq \alpha_j^- k(\bfx, \bfx_j^-)$, $\forall \bfx_j^-$.
%The proposition holds because $k(\bfx, \bfx_i^+) = \eta\exp{(-\gamma\|\bfx - \bfx_i^+ \|^2)} \leq k(\bfx, \bfx_*^+) = \eta\exp{(-\gamma \|\bfx - \bfx_*^+ \|^2)}$, $\forall \bfx_i^+$ and $\sum_{j = 1}^{M^-} \alpha_j^- k(\bfx, \bfx_j^-) \geq \alpha_j^- k(\bfx, \bfx_j^-)$, $\forall \bfx_j^-$.
%\vspace{-1mm}
\end{proof}
%,
%\[
%F(x) = \sum_{i = 1}^{M^+} \alpha_i^+ k(x_i^+, x) - \sum_{j = 1}^{M^-} \alpha_j^- k(x_j^-, x) \leq k(x, x_*^+) \sum_{i = 1} ^ {M^+} \alpha^+_i  - \alpha^-_j k(x, x_j^-) \qedhere
%\]
Fig.~\ref{fig:boundary_example}, \ref{fig:upperbound_line1}, \ref{fig:upperbound_line2} illustrate the exact decision boundary $F(\bfx) = 0$ and the accuracy of the upper bound $U(\bfs(t))$ along two lines $\bfs(t)$  in C-space. The upper bound $U(\bfs(t))$  is loose in the occupied space but remains close to the score $F(\bfs(t))$ in the free space since the RBF kernel $k(\bfx,\bfx')$ is negligible away from the obstacle's boundary. As a result, the boundary $U(\bfx) = 0$ remains close to the true decision boundary. The upper bound provides a conservative but fairly accurate ``inflated boundary'', allowing efficient collision checking for line segments and polynomial curves as shown next.
\subsection{Collision Checking for Line Segments}
\label{subsubsec:collision_check_line}
Suppose that the robot's path is described by a ray $\bfs(t) = \bfs_0 + t\bfv$, $t\geq 0$ such that $\bfs_0$ is obstacle-free, i.e., $U(\bfs_0) < 0$, and $\bfv$ is constant. To check if $\bfs(t)$ collides with the inflated boundary, we find the first time $t_u$ such that $U(\bfs(t_u)) \geq 0$. This means that $\bfs(t)$ is collision-free for $t \in [0,t_u)$. 
\begin{proposition}
%robot moves with a constant velocity $v$ starting from $s_0$ in the free region, i.e., $U(s_0) < 0$. The 
\label{prop:line_curve_defensive_checking} Consider a ray $\bfs(t) = \bfs_0 + t\bfv$, $t\geq 0$ with $U(\bfs_0) < 0$. Let $\bfx_i^+$ and $\bfx_j^-$ be arbitrary positive and negative support vectors. Then, any point $\bfs(t)$ is free as long as:
\setlength{\abovedisplayskip}{3pt}
\setlength{\belowdisplayskip}{3pt}
\begin{equation}
\label{eq:line_curve_t_condition}
t < t_u := \min_{i \in \{1, \ldots, M^+\}} \rho (\bfs_0, \bfx_i^+, \bfx_j^-)
\end{equation}
where $\beta = \frac{1}{\gamma}\left(\log (\alpha_j^-) - \log (\sum_{i = 1} ^ {M^+} \alpha_i^+)\right)$ and \\$\scaleMathLine[1.0]{\rho(\bfs_0, \bfx_i^+, \bfx_j^-) = 
\begin{cases} 
      +\infty, & \text{if}\ \bfv^T(\bfx_i^+ - \bfx_j^-) \leq 0  \\
      \frac{\beta - \Vert \bfs_0 - \bfx_j^-\Vert ^2 - \Vert \bfs_0 + \bfx_i^+ \Vert ^2}{2\bfv^T(\bfx_j^- - \bfx_i^+)}, & \text{if}\ \bfv^T(\bfx_i^+ - \bfx_j^-) > 0 
\end{cases}}$.
\end{proposition}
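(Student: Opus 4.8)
The plan is to leverage Proposition~\ref{prop:score_bounds}: since $F(\bfs(t)) \le U(\bfs(t))$, it suffices to find all $t$ for which $U(\bfs(t)) < 0$, because this alone certifies $\bfs(t)$ to be free. The first step I would take is to remove the dependence of $U$ on the position-dependent closest positive support vector $\bfx_*^+$. Because the kernel is radial and monotone, $k(\bfs(t),\bfx_*^+) = \max_{i} k(\bfs(t),\bfx_i^+)$, so if I define the pairwise bound $U_i(\bfx) := k(\bfx,\bfx_i^+)\sum_{l=1}^{M^+}\alpha_l^+ - k(\bfx,\bfx_j^-)\alpha_j^-$, then $U(\bfx) = \max_i U_i(\bfx)$. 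Consequently $U(\bfs(t)) < 0$ holds exactly when $U_i(\bfs(t)) < 0$ for every positive support vector $\bfx_i^+$, which is what ultimately forces the minimization over $i$ in the definition of $t_u$.

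The second step is to turn each scalar condition $U_i(\bfs(t)) < 0$ into an explicit inequality in $t$. Writing out the RBF kernel and taking logarithms (legitimate since all kernel values and weights are strictly positive), the inequality $k(\bfs(t),\bfx_i^+)\sum_l \alpha_l^+ < k(\bfs(t),\bfx_j^-)\alpha_j^-$ becomes $\|\bfs(t)-\bfx_j^-\|^2 - \|\bfs(t)-\bfx_i^+\|^2 < \beta$, with $\beta$ exactly the constant in the statement. Substituting $\bfs(t) = \bfs_0 + t\bfv$ and expanding the squared norms, the quadratic $t^2\|\bfv\|^2$ terms cancel between the two norms, leaving a \emph{linear} inequality in $t$ of the form $2t\,\bfv^T(\bfx_i^+-\bfx_j^-) < \beta - \|\bfs_0-\bfx_j^-\|^2 + \|\bfs_0-\bfx_i^+\|^2$. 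This cancellation is the reason line segments admit a closed-form collision time, and it is the computational crux of the argument.

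Finally, I would solve this linear inequality by a case split on the sign of the slope $\bfv^T(\bfx_i^+-\bfx_j^-)$. The hypothesis $U(\bfs_0) < 0$, combined with $U = \max_i U_i$, guarantees $U_i(\bfs_0) < 0$ for every $i$, i.e. the right-hand side above is strictly positive at $t = 0$. Hence when $\bfv^T(\bfx_i^+-\bfx_j^-) \le 0$ the inequality holds for all $t \ge 0$ and $\rho = +\infty$; when $\bfv^T(\bfx_i^+-\bfx_j^-) > 0$ it holds precisely for $t$ below the ratio defining $\rho(\bfs_0,\bfx_i^+,\bfx_j^-)$. Taking the intersection over all $i$ — equivalently the minimum of the thresholds — yields $U(\bfs(t)) < 0$ for all $t < t_u$, and Proposition~\ref{prop:score_bounds} then certifies freeness. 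I expect the main obstacle to be bookkeeping rather than conceptual: correctly decomposing $U$ through the monotone kernel to justify the $\min_i$, and tracking signs carefully through the logarithm and the case split so that the threshold $\rho$ comes out with the correct orientation.
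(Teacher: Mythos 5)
Your proof is correct and follows essentially the same route as the paper's: the paper likewise reduces freeness to $U(\bfs(t))<0$ via Prop.~\ref{prop:score_bounds}, solves that condition in closed form for the nearest positive support vector, and handles the $t$-dependence of $\bfx_*^+$ by minimizing over the finite set of positive support vectors; your $U=\max_i U_i$ decomposition is a cleaner justification of that last step, and the log/quadratic-cancellation algebra you spell out is exactly what the paper's two-line proof leaves implicit. One remark worth recording: the threshold your derivation produces, $\rho=\bigl(\beta-\|\bfs_0-\bfx_j^-\|^2+\|\bfs_0-\bfx_i^+\|^2\bigr)/\bigl(2\bfv^T(\bfx_i^+-\bfx_j^-)\bigr)$, does not literally match the $\rho$ printed in the proposition, whose numerator reads $-\|\bfs_0+\bfx_i^+\|^2$ and whose denominator carries the opposite sign. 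The printed formula is evidently a typo --- the term $\|\bfs_0+\bfx_i^+\|$ is not translation invariant, and in the branch $\bfv^T(\bfx_i^+-\bfx_j^-)>0$ the printed denominator is negative --- whereas your version agrees with the $\bar{\rho}$ of Corollary~\ref{corollary:free_ball}, which the paper obtains from this proposition by Cauchy--Schwarz, confirming that your formula is the intended one.
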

\begin{proof} 
From Prop. \ref{prop:score_bounds}, a point $\bfs(t)$ is free if $U(\bfs(t)) < 0$ or 
\begin{equation}
\label{eq:line_curve_t_condition_x_star}
t < \rho(\bfs_0, \bfx_*^+, \bfx_j^-)
\end{equation}
Since $\bfx_*^+$ varies with $t$ but belongs to a finite set, $U(\bfs(t)) < 0$ if we take the minimum of $\rho (\bfs_0, \bfx_i^+, \bfx_j^-)$ over all $ \bfx_i^+$.\qedhere
%\begin{equation}
%\label{eq:line_min_t}
%t < t_u = \min_{i \in \{1, \ldots, M^+\}} \rho (s_0, x_i^+, x_j^-)
%\end{equation}
\end{proof}

Prop. \ref{prop:score_bounds} and \ref{prop:line_curve_defensive_checking} hold for any negative support vector $\bfx_j^-$. Since $\bfx_j^-$ belongs to a finite set, we can take the best bound on $t$ over the set of negative support vectors.
% as shown in Corollary \ref{collorary:line_curve_tigher_bound}.
\begin{corollary}
\label{collorary:line_curve_tigher_bound} Consider a ray $\bfs(t) = \bfs_0 + t\bfv$, $t\geq 0$ with $U(\bfs_0) < 0$. Let $\bfx_i^+$ and $\bfx_j^-$ be arbitrary positive and negative support vectors, respectively. A point $\bfs(t)$ is free as long as:
\begin{equation}
\label{eq:line_curve_t_condition_tigher_bound}
t <  t_u^* := \min_{i \in \{1, \ldots, M^+\}} \max_{j \in \{1, \ldots, M^-\}} \rho (\bfs_0, \bfx_i^+, \bfx_j^-).
\end{equation}
\end{corollary}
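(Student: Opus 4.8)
The plan is to prove the corollary by a \emph{pointwise} argument at each fixed time $t$, rather than by taking a union of the fixed-$\bfx_j^-$ certificates of Prop.~\ref{prop:line_curve_defensive_checking} over the negative support vectors. That union only yields freeness for $t<\max_j\min_i\rho$, whereas the minimax inequality $\max_j\min_i\rho\le\min_i\max_j\rho$ shows the claimed threshold $t_u^*$ is at least as large. The observation enabling the stronger bound is that the upper bound of Prop.~\ref{prop:score_bounds} holds simultaneously for every negative support vector while using one and the same closest positive support vector $\bfx_*^+$: writing $U_j(\bfx)=k(\bfx,\bfx_*^+)\sum_{i=1}^{M^+}\alpha_i^+ - k(\bfx,\bfx_j^-)\alpha_j^-$, Prop.~\ref{prop:score_bounds} gives $F(\bfx)\le U_j(\bfx)$ for each $j$, hence $F(\bfx)\le\min_j U_j(\bfx)$. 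Consequently $\bfs(t)$ is free as soon as \emph{some} negative support vector makes $U_j(\bfs(t))<0$.

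Next I would translate the pointwise condition $U_j(\bfs(t))<0$ into the $\rho$-threshold. Fix $t$, let $\bfx_*^+$ be the closest positive support vector to $\bfs(t)$, and set $S^+=\sum_{i=1}^{M^+}\alpha_i^+$. Then $U_j(\bfs(t))<0$ reads $S^+ k(\bfs(t),\bfx_*^+)<\alpha_j^- k(\bfs(t),\bfx_j^-)$; taking logarithms of the RBF kernel cancels the quadratic-in-$t$ term $t^2\|\bfv\|^2$ exactly as in the proof of Prop.~\ref{prop:line_curve_defensive_checking}, leaving a linear inequality in $t$ that holds precisely when $t<\rho(\bfs_0,\bfx_*^+,\bfx_j^-)$. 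Therefore the existence of a negative support vector certifying freeness at time $t$ is equivalent to $t<\max_j\rho(\bfs_0,\bfx_*^+,\bfx_j^-)$.

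Finally I would remove the dependence on knowing $\bfx_*^+$ in advance. Since $\bfx_*^+$ varies with $t$ but ranges over the finite index set $\{1,\dots,M^+\}$, requiring $t<\min_i\max_j\rho(\bfs_0,\bfx_i^+,\bfx_j^-)=t_u^*$ guarantees $t<\max_j\rho(\bfs_0,\bfx_i^+,\bfx_j^-)$ for every $i$, in particular for whichever index is closest at time $t$. Picking the certifying $j^*$ and chaining $F(\bfs(t))=\sum_i\alpha_i^+ k(\bfs(t),\bfx_i^+)-\sum_j\alpha_j^- k(\bfs(t),\bfx_j^-)\le S^+ k(\bfs(t),\bfx_*^+)-\alpha_{j^*}^- k(\bfs(t),\bfx_{j^*}^-)<0$ then shows $\bfs(t)$ is free. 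The main obstacle is precisely this $t$-dependence of the closest positive support vector: it is what forces the min-over-$i$ of the best per-vector certificate, and it is exactly what separates the correct $\min_i\max_j$ threshold from the weaker $\max_j\min_i$ one, because the negative support vector may be chosen pointwise (after $t$, hence $\bfx_*^+$, is fixed) instead of being held fixed over an interval. A minor point to dispatch is the $\rho=+\infty$ branch, where $\bfv^T(\bfx_i^+-\bfx_j^-)\le0$ makes the linear function non-increasing; there the premise $U(\bfs_0)<0$ ensures the start is free and validates that a certificate, once negative, remains negative.
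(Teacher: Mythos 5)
Your proof is correct, and it is a faithful---but more careful---expansion of what the paper actually offers: the paper ``proves'' Corollary~\ref{collorary:line_curve_tigher_bound} with the single remark that Prop.~\ref{prop:score_bounds} and~\ref{prop:line_curve_defensive_checking} hold for any $\bfx_j^-$ and that one may ``take the best bound on $t$ over the set of negative support vectors.'' Read literally, that union of fixed-$j$ certificates only yields freeness for $t < \max_j \min_i \rho(\bfs_0,\bfx_i^+,\bfx_j^-)$, which by the minimax inequality is weaker than the stated threshold $t_u^* = \min_i \max_j \rho(\bfs_0,\bfx_i^+,\bfx_j^-)$ in~\eqref{eq:line_curve_t_condition_tigher_bound}. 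Your pointwise-in-$t$ argument---fix $t$, hence the closest positive support vector $\bfx_*^+$, observe that $F \le \min_j U_j$ with the \emph{same} $\bfx_*^+$ across all $j$, pick the certifying $j^*$ for that $\bfx_*^+$, and only then take the min over $i$ to cover the unknown index $i(t)$---is precisely the mechanism needed to justify the stronger $\min_i\max_j$ order, and it mirrors the structure of the paper's own proof of Prop.~\ref{prop:line_curve_defensive_checking} (where finiteness of the positive set plays the analogous role). So you have not taken a different route so much as closed a real gap in the paper's one-line justification; your identification of the pointwise (after-$t$) choice of $j$ as what separates $\min_i\max_j$ from $\max_j\min_i$ is the essential content. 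Your handling of the $\rho = +\infty$ branch carries the same implicit assumption as the paper's Prop.~\ref{prop:line_curve_defensive_checking} (the certificate must be negative at $t=0$ for the chosen pair, which is where the hypothesis $U(\bfs_0)<0$ enters), so no discrepancy arises there either.
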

The computational complexities of calculating $t_u$ and $t^*_u$ are $O(M)$ and $O(M^2)$, respectively, where $M= M^+ + M^-$. 
%Depending on the time constraints, one can pick the faster but less accurate bound $t_u$ or the slower but more accurate bound $t^*_u$. 
Note that since often the robot's movement is limited to the neighborhood of its current position, $t_u$ can reasonably approximate $t^*_u$ if $\bfx_j^-$ is chosen as the negative support vector, closest to the current location $\bfs_0$.
%Note also that Eq.~\eqref{eq:line_curve_t_condition} is sufficient but not necessary for Eq.~\eqref{eq:line_curve_t_condition_x_star}, i.e., Eq.~\eqref{eq:line_curve_t_condition} is more conservative than Eq.~\eqref{eq:line_curve_t_condition_x_star}. However, 
Calculation of $t_u$ in Eq.~\eqref{eq:line_curve_t_condition} is efficient in the sense that it has the same complexity as checking a point for collision ($O(M)$), yet it can evaluate the collision status for an entire line segment for $t \in [0, t_u)$ \NEW{without sampling}.

Collision checking becomes slower when the number of support vectors $M$ increases. We improve this further by using $K^+$ and $K^-$ nearest positive and negative support vectors instead of $M^+$ and $M^-$, respectively. Assuming $K^+$ and $K^-$ are constant, the computational complexities of calculating $t_u$ and $t^*_u$ reduce to $O(\log M)$ which is the complexity of an $R^*$-tree lookup.

%\begin{figure}[h!]
%\includegraphics[width=\linewidth]{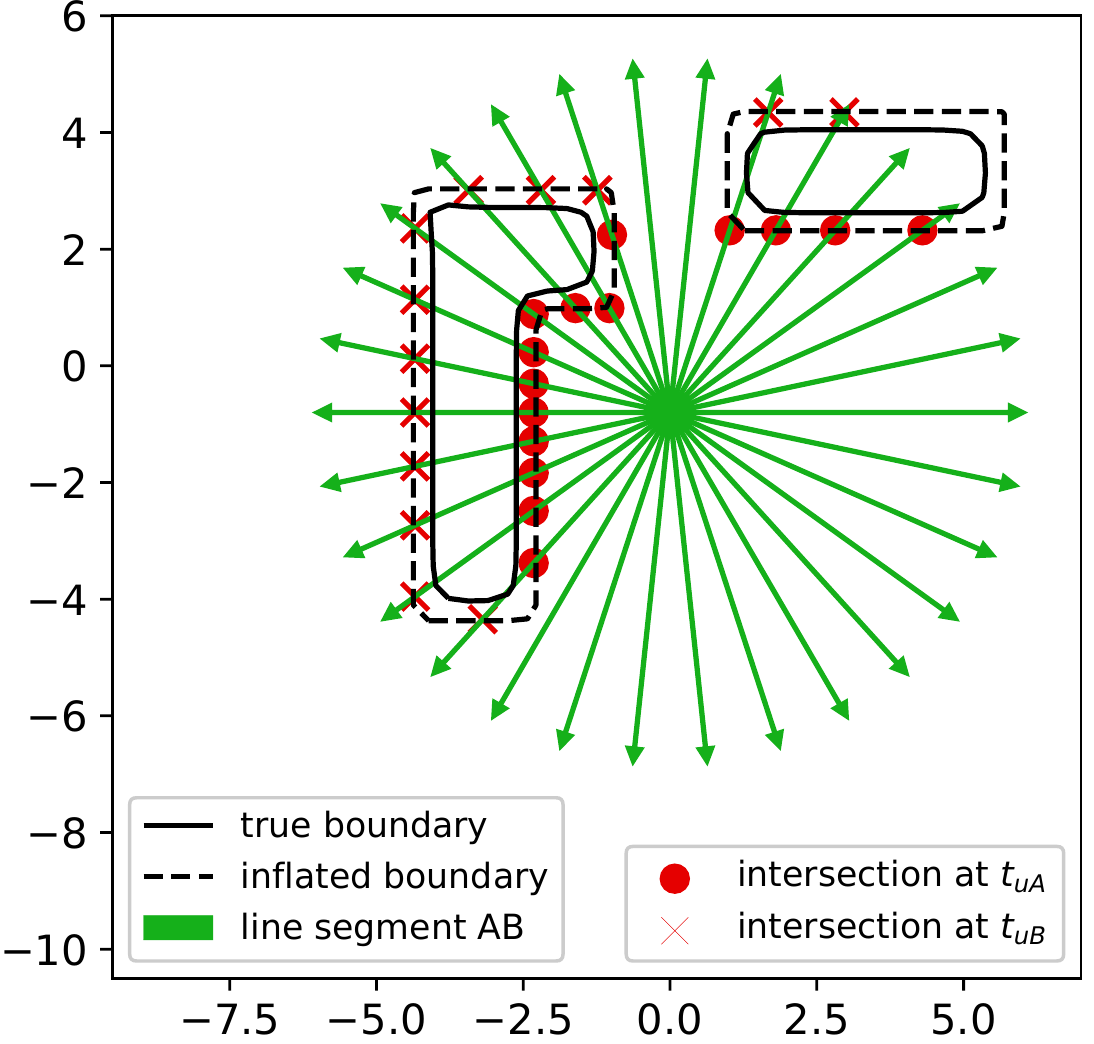}
%\caption{Collision checking for line segments with bounds $t_{uA}$ and $t_{uB}$ obtained from Eq.~\eqref{eq:line_curve_t_condition_tigher_bound}.}
%\label{fig:collision_checking_line}
%\end{figure}
%%%%%%%%%%%%%%%%%%%%%%%%%%%%%%%%%%%%%%%%
%\textbf{Line Segments.}
In path planning, one often performs collision checking for a line segment $(\bfs_A, \bfs_B)$. All points on the segment can be expressed as $\bfs(t_A) = \bfs_A + t_A\bfv_A$, $\bfv_A = \bfs_B - \bfs_A$, $0 \leq t_A \leq 1$. Using the upper bound $t_{uA}$ on $t_A$ provided by Eq.~\eqref{eq:line_curve_t_condition} or Eq.~\eqref{eq:line_curve_t_condition_tigher_bound}, we find the free region on $(\bfs_A, \bfs_B)$ starting from $\bfs_A$. Similarly, we calculate $t_{uB}$ which specifies the free region from $\bfs_B$. If $t_{uA} + t_{uB} > 1$, the entire line segment is free; otherwise the segment is considered colliding. The proposed approach is summarized in Alg.~\ref{alg:collision_checking_line} and illustrated in Fig.~\ref{fig:collision_checking}.
\begin{algorithm}[h]
\caption{Line segment collision check}
\label{alg:collision_checking_line}
\footnotesize
	\begin{algorithmic}[1]	  
		\Require Line segment $(\bfs_A, \bfs_B)$; support vectors $\Lambda^+ = \{(\bfx_i^+, \alpha_i^+)\}$ and $\Lambda^- = \{(\bfx_j^-, \alpha_j^-)\}$
		%\Ensure True (Free) or False (Colliding)
		\State $\bfv_A = \bfs_B - \bfs_A$, $\bfv_B = \bfs_A - \bfs_B$
		\State Calculate $t_{uA}$ and $t_{uB}$ using Eq.~\eqref{eq:line_curve_t_condition} or Eq.~\eqref{eq:line_curve_t_condition_tigher_bound}
    \If{$t_{uA} + t_{uB} > 1$} \Return True (Free)
		\Else \;\Return False (Colliding)
		\EndIf
	\end{algorithmic}
\end{algorithm}
\begin{figure}[t]
\vspace{-3mm}
\centering
\includegraphics[width=0.5\linewidth]{fig/check_line_examples.pdf}\hfill%
\includegraphics[width=0.49\linewidth]{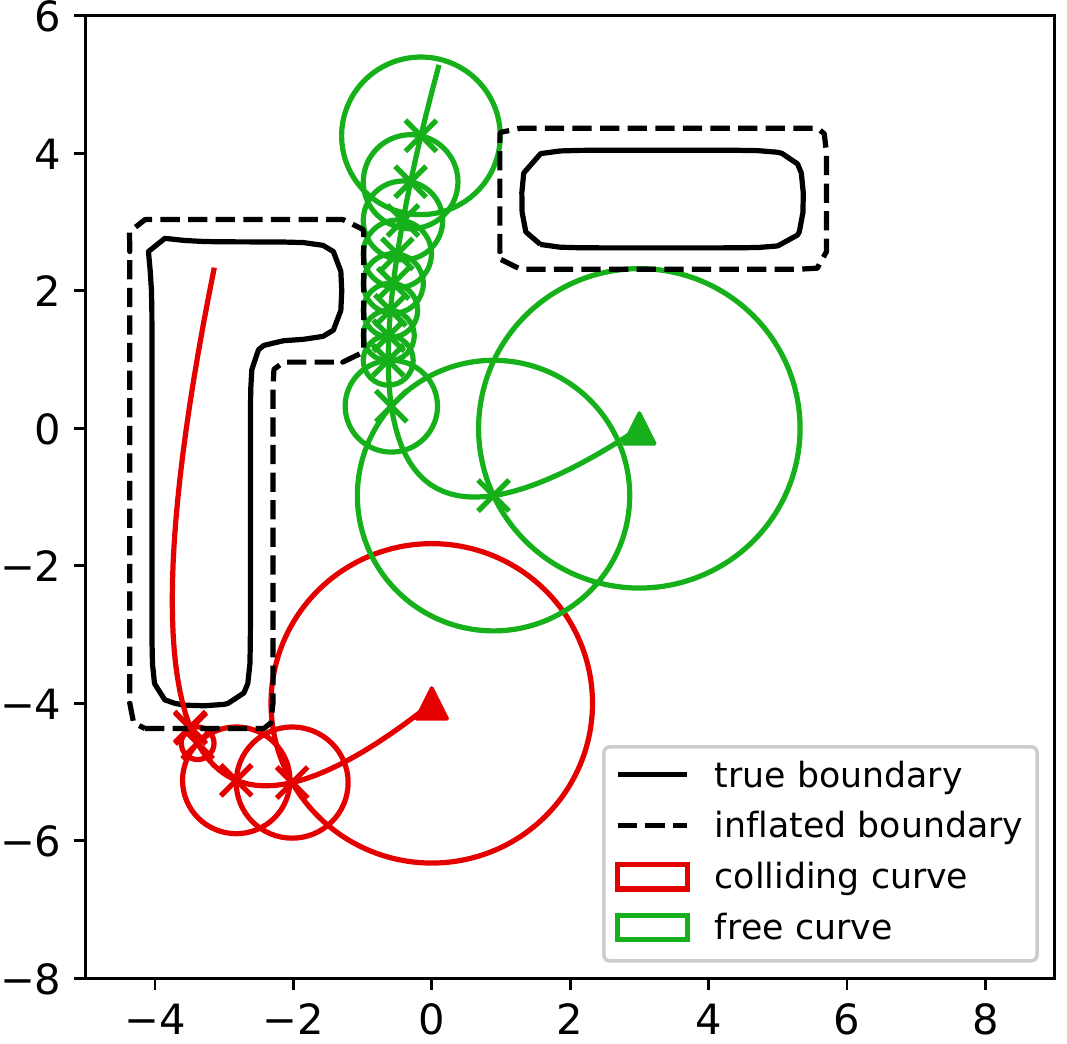}
\caption{Collision checking for line segments (left), with bounds $t_{uA}$ and $t_{uB}$ obtained from Eq.~\eqref{eq:line_curve_t_condition_tigher_bound}, and for second-order polynomial curves (right) using Euclidean balls.}
\label{fig:collision_checking}
\vspace{-3mm}
\end{figure}
\begin{algorithm}[h!]
\caption{Polynomial curve collision check}
\label{alg:collision_checking_curve}
  \footnotesize
	\begin{algorithmic}
		\Require Polynomial curve $\bfs(t)$, $t \in [0,t_f]$; threshold $\varepsilon$; support vectors $\Lambda^+ = \{(\bfx_i^+, \alpha_i^+)\}$ and $\Lambda^- = \{(\bfx_j^-, \alpha_j^-)\}$, $k = 0$, $t_0 = 0$
		%Set of $M^+$ positive support vectors and their weight $\Lambda^+ = \{(x_i^+, \alpha_i^+)\}$; set of $M^-$ positive support vectors and their weight $\Lambda^- = \{(x_j^-, \alpha_j^-)\}$, a polynomial curve $s(t), x(0) = s_0, x(1) = s_f, t \in [0,1]$, a threshold $\varepsilon$.
		%\Ensure True (Free) or False (Colliding)
		\While{True}
			\State Calculate $r_{k}$ using Eq. \eqref{eq:line_curve_radius} or Eq. \eqref{eq:line_curve_radius_star}.
			\If{$r_k < \varepsilon$} \Return False (Colliding)
			\EndIf
			\State Solve $\Vert \bfs(t) - \bfs(t_k) \Vert = r_k$ for $t_{k+1} \geq t_{k}$
			\If{$t_{k+1} \geq t_f$}
				\Return True (Free)
			\EndIf
		\EndWhile
	\end{algorithmic}
\end{algorithm}

\subsection{Collision Checking for Polynomial Curves}
\label{subsubsec:collision_check_poly_curve}
% The direction does not change along a line but might vary along a general polynomial curve. 
%consider all directions of $v$ when checking a polynomial curve for collision. Collorary \ref{corollary:free_ball} considers an arbitrary unit velocity vector $v$ (i.e. $\Vert v \Vert = 1$) and find a ball of radius $r$ around $s_0$ whose interior is completely free.
In the previous section, $\bfv$ was a constant velocity representing the direction of motion of the robot. In general, the velocity might be time varying leading to trajectories described by polynomial curves~\cite{liu2017search}. We extend the collision checking algorithm by finding an Euclidean ball $\mathcal{B}(\bfs_0, r)$ around $\bfs_0$ whose interior is free of obstacles. 
%treating $\bfv$ as an arbitrary unit vector that may point in any direction and find a Euclidean ball $\mathcal{B}(\bfs_0, r)$ around $\bfs_0$ whose interior is free of obstacles. 
%of radius $r$
\begin{corollary}
\label{corollary:free_ball} Let $\bfs_0 \in \mathcal{C}$ be such that $U(\bfs_0) <0$ and let $\bfx_i^+$ and $\bfx_j^-$ be arbitrary positive and negative support vectors. Then, every point inside the Euclidean balls $\mathcal{B}(\bfs_0, r_u) \subseteq \mathcal{B}(\bfs_0, r_u^*)$ is free for:
\begin{align}
r_u &:= \min_{i \in \{1, \ldots, M^+\}} \bar{\rho}(\bfs_0, \bfx_i^+, \bfx_j^-) \label{eq:line_curve_radius}\\
r_u^* &:= \min_{i \in \{1, \ldots, M^+\}} \max_{j \in \{1, \ldots, M^-\}} \bar{\rho}(\bfs_0, \bfx_i^+, \bfx_j^-) \label{eq:line_curve_radius_star}
\end{align}
where $\bar{\rho}(\bfs_0, \bfx_i^+, \bfx_j^-) = \frac{\beta - \|\bfs_0 - \bfx_j^-\| ^2 + \|\bfs_0 - \bfx_i^+ \|^2}{2 \| \bfx_j^- - \bfx_i^+ \|}$ and $\beta = \frac{1}{\gamma} \prl{\log (\alpha_j^-) - \log (\sum_{i = 1} ^ {M^+} \alpha_i^+)}$.
\end{corollary}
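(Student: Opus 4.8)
The plan is to mirror the derivation of Proposition~\ref{prop:line_curve_defensive_checking}, but to replace the one-dimensional ray parametrization with an unconstrained perturbation ranging over the whole ball. By Proposition~\ref{prop:score_bounds}, a point $\bfx$ is free whenever $U(\bfx)<0$, i.e. $k(\bfx,\bfx_*^+)\sum_{i}\alpha_i^+ < k(\bfx,\bfx_j^-)\alpha_j^-$, where $\bfx_*^+$ is the positive support vector closest to $\bfx$. First I would substitute the RBF kernel from the Assumption and take logarithms of both sides. Since $\eta$ and the exponentials are strictly positive, the inequality $U(\bfx)<0$ becomes the equivalent condition $\|\bfx-\bfx_j^-\|^2-\|\bfx-\bfx_*^+\|^2<\beta$, where $\beta=\frac{1}{\gamma}\left(\log\alpha_j^- - \log\sum_i\alpha_i^+\right)$ is exactly the constant in the statement. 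The crucial structural observation is that the quadratic term $\|\bfx\|^2$ cancels, so the left-hand side is affine in $\bfx$.

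Next I would parametrize points of the ball as $\bfx=\bfs_0+\bfd$ with $\|\bfd\|<r$ and, for the moment, fix an arbitrary positive support vector $\bfx_i^+$ in place of $\bfx_*^+$. Expanding both squared norms, the term $\|\bfd\|^2$ cancels as well, leaving $\|\bfx-\bfx_j^-\|^2-\|\bfx-\bfx_i^+\|^2 = \|\bfs_0-\bfx_j^-\|^2-\|\bfs_0-\bfx_i^+\|^2 + 2\bfd^\top(\bfx_i^+-\bfx_j^-)$. The key step is to maximize this affine function over the ball: by Cauchy--Schwarz the worst-case perturbation is $\bfd$ aligned with $\bfx_i^+-\bfx_j^-$, contributing the extra term $2r\|\bfx_i^+-\bfx_j^-\|$. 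Requiring this worst-case value to remain below $\beta$ and solving for $r$ yields precisely $r\le\bar\rho(\bfs_0,\bfx_i^+,\bfx_j^-)$, once $-2\bfs_0^\top(\bfx_i^+-\bfx_j^-)-\|\bfx_j^-\|^2+\|\bfx_i^+\|^2$ is rewritten as $\|\bfs_0-\bfx_i^+\|^2-\|\bfs_0-\bfx_j^-\|^2$ in the numerator. Note that the directional denominator $\bfv^\top(\bfx_j^--\bfx_i^+)$ of $\rho$ is here replaced by the norm $\|\bfx_j^--\bfx_i^+\|$, precisely because the maximization is over all directions in the ball rather than along a single ray.

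Finally I would remove the two simplifications. Because the closest positive support vector $\bfx_*^+$ depends on $\bfx$ and varies across the ball, but ranges over the finite set $\{\bfx_i^+\}$, enforcing the condition for every $\bfx_i^+$ --- that is, taking the minimum of $\bar\rho$ over $i$ --- guarantees $U(\bfx)<0$ uniformly on the ball and produces $r_u$ in \eqref{eq:line_curve_radius}. The upper bound of Proposition~\ref{prop:score_bounds} is valid for every negative support vector $\bfx_j^-$, so, exactly as in Corollary~\ref{collorary:line_curve_tigher_bound}, we may retain the tightest bound over the finite set of negative support vectors, which gives $r_u^*$ in \eqref{eq:line_curve_radius_star} together with the inclusion $\mathcal{B}(\bfs_0,r_u)\subseteq\mathcal{B}(\bfs_0,r_u^*)$.

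I would expect the main obstacle to be the careful handling of $\bfx_*^+$: one must argue that replacing the $\bfx$-dependent nearest positive support vector by a worst-case minimization over all positive support vectors is sound, and verify that the strict (open-ball) inequality is preserved, since the supremum of the affine function over the open ball is attained only in the limit at the boundary. The remaining work --- cancellation of the quadratic terms and the Cauchy--Schwarz maximization --- is routine and directly parallels the proof of Proposition~\ref{prop:line_curve_defensive_checking}.
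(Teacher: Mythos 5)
Your proposal is correct and takes essentially the same route as the paper: the paper's proof simply invokes Proposition~\ref{prop:line_curve_defensive_checking} and applies Cauchy--Schwarz to bound $\bfv^T(\bfx_j^- - \bfx_*^+) \leq \|\bfx_j^- - \bfx_*^+\|$ uniformly over unit directions $\bfv$, which is exactly your worst-case maximization of the affine term over the ball. Your expanded re-derivation from Proposition~\ref{prop:score_bounds} (log transform, cancellation of quadratic terms, then the min over $i$ and max over $j$) adds detail but no new idea, and your attention to the strict inequality on the open ball is a sound refinement of the same argument.
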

\begin{proof}
%By the Cauchy-Schwarz inequality, $v^T (x_j^- - x^+_*)  \leq \|v\| \| (x_j^- - x^+_*)\|$. For any unit $v$, Eq.~\eqref{eq:line_curve_condition} is satisfied if $t < \mu(s_0, x_i^*, x_j^-) = \frac{\beta - \|s_0 - x_j^-\|^2 + \|s_0 - x_*^+ \|^2}{2 \|(x_j^- - x^+_*)\|}$. Thus, any point $s(t)$ strictly inside $\mathcal{B}(s_0, \mu(s_0, x_i^*, x_j))$ is free. However, this only holds if $x_i^*$ is the closest positive support vector to $s(t)$ which is unknown for an arbitrary $t$. Fortunately, $x_i^*$ belongs to a finite set of positive support vectors. Therefore, any point in the interior of the ball $\mathcal{B}(s_0, r_u)$ where $r_u$ is defined in Eq.~\ref{eq:line_curve_radius} is free and since this holds for any negative support vector $x_j^-$, the largest radius is $r_u^*$ in Eq.~\ref{eq:line_curve_radius}.
Directly follows from Prop. \ref{prop:line_curve_defensive_checking} by using the Cauchy-Schwarz inequality to bound $\bfv^T (\bfx_j^- - \bfx^+_*)  \leq  \| (\bfx_j^- - \bfx^+_*)\|$ for any unit vector $\bfv$ (i.e. $\|\bfv\| = 1$).\qedhere
\vspace{-1mm}
\end{proof}
Consider a polynomial $\bfs(t) = \bfs_0 + \bfa_1 t + \bfa_2 t^2 + \ldots + {\bfa}_d t^d$, $t \in [0, t_f]$  from $\bfs_0$ to $\bfs_f := \bfs(t_f)$. Collorary~\ref{corollary:free_ball} shows that all points inside $\mathcal{B}(\bfs_0, r)$ are free for $r = r_u$ or $r^*_u$. If we can find the smallest positive $t_1$ such that $\|\bfs(t_1) - \bfs_0 \| = r$, then all points on the curve $\bfs(t)$ for $t \in [0, t_1)$ are free. This is equivalent to finding the smallest non-negative root of a $2d$-order polynomial. Note that, if $d \leq 2$, there is a closed-form solution for $t_{1}$. For higher order polynomials, one can use a root-finding algorithm to obtain $t_1$ numerically. We perform collision checking by iteratively covering the curve by Euclidean balls. If the radius of any ball is smaller than a threshold $\varepsilon$, the curve is considered colliding. Otherwise, the curve is considered free. The collision checking process for $d$-order polynomial curves is shown in Alg.~\ref{alg:collision_checking_curve} and Fig.~\ref{fig:collision_checking}. 
\section{Autonomous Navigation}
\label{sec:auto_nav}
Finally, we present a complete online mapping and navigation approach that solves Problem~\ref{problem_formulation_unknown_env}. Given the kernel-based map $\hat{m}_{t_k}$ proposed in Sec.~\ref{subsec:ogm_with_fastron}, a motion planning algorithm such as $A^*$~\cite{Russell_AI_Modern_Approach} may be used with our collision-checking algorithms to generate a path that solves the autonomous navigation problem. The robot follows the path for some time and updates the map estimate $\hat{m}_{t_{k+1}}$ with new observations. Using the updated map, the robot re-plans the path and follows the new path instead. This process is repeated until the goal is reached or a time limit is exceeded (Alg.~\ref{alg:auto_nav_fastron_ogm}).
% summarizes the autonomous navigation approach.
%(see Alg.~\ref{alg:get_successors})
\iffalse
\begin{algorithm}[t]
\caption{getSuccessors subroutine in $A^*$}
\label{alg:get_successors}
  \footnotesize
	\begin{algorithmic}[h!]
		\Require Current position $\bfs$; neighbor threshold $\delta$; support vectors $\Lambda^+ = \{(\bfx_i^+, \alpha_i^+)\}$ and $\Lambda^- = \{(\bfx_j^-, \alpha_j^-)\}$ 
		%Set of $M^+$ positive support vectors and their weight $\Lambda^+ = \{(x_i, \alpha_i)\}$; set of $M^-$ positive support vectors and their weight $\Lambda^- = \{(x_j^-, \alpha_j^-)\}$, current cell $x$, $S = \emptyset$, neighbor threshold $\delta$.
		%\Ensure Set of successors $S$.
		\State $S \gets \emptyset$; Find potential successors $\mathcal{N}(\bfs) = \{\bfs'\in \bar{\mathcal{C}}: \| \bfs' -\bfs \|_\infty \leq \delta\}$
		\For{$\bfs'$ in $\mathcal{N}(\bfs)$}
			\If{Check Line Segment$(\bfs, \bfs',\Lambda^+ ,\Lambda^-)$} \Comment{Alg.~\ref{alg:collision_checking_line}} % Check if segment $(\bfs,\bfs')$ is free (
				\State $S \leftarrow S \cup \{\bfx'\}$
			\EndIf
		\EndFor
		\State \Return Set of successors $S$
	\end{algorithmic}
\end{algorithm}
\fi

We consider robots with two different motion models. In simulation, we use a first-order fully actuated robot, $\dot{\bfs} = \bfv$, with piecewise-constant velocity $\bfv(t) \equiv \bfv_k \in \calV$ for $t \in [t_k,\;t_{k+1})$, leading to piecewise-linear trajectories:
\begin{equation}
\label{eq:fas_trajectory}
\bfs(t) = \bfs_k + (t-t_k)\bfv_k, \qquad t \in [t_k,\;t_{k+1}),
\end{equation}
where $\bfs_k := \bfs(t_k)$. In real experiments, we consider a ground wheeled Ackermann-drive robot:
\begin{equation}
\label{eq:car_dynamics}
\dot{\bfs} = v \begin{bmatrix} \cos(\theta)\\\sin(\theta) \end{bmatrix},\qquad \dot{\theta} = \frac{v}{\ell}\tan \phi,
\end{equation}
where $\bfs \in \mathbb{R}^2$ is the position, $\theta \in \mathbb{R}$ is the orientation, $v \in \mathbb{R}$ is the linear velocity, $\phi \in \mathbb{R}$ is the steering angle, and $\ell$ is the distance between the front and back wheels. The nonlinear car dynamics can be transformed into a 2nd-order fully actuated system $\ddot{\bfs} = \bfa$ via feedback linearization~\cite{de2000stabilization, franch2009control}.
\iffalse
The original inputs can be recovered from $\bfa$:
\begin{equation}
\dot{v} = \bfa^T\begin{bmatrix} \cos(\theta)\\\sin(\theta) \end{bmatrix} \qquad \tan\phi = \frac{\ell}{v^2} \bfa^T\begin{bmatrix} -\sin(\theta)\\\phantom{-}\cos(\theta) \end{bmatrix}.
\end{equation}
\fi
Using piecewise-constant acceleration $\bfa(t) \equiv \bfa_k \in \calA$ for $t \in [t_k,\;t_{k+1})$ leads to piecewise-2nd-order-polynomial trajectories:
\begin{equation}
\label{eq:car_trajectory}
\bfs(t) = \bfs_k + (t-t_k) v_k\begin{bmatrix} \cos(\theta_k)\\\sin(\theta_k) \end{bmatrix} + \frac{(t-t_k)^2}{2}\bfa_k,
%, \quad t \in [t_k,\;t_{k+1}),
\end{equation}
where $\bfs_k := \bfs(t_k)$, $\theta_k := \theta(t_k)$, $v_k := v(t_k)$. To apply $A^*$ to these models, we restrict the input sets $\calV$ and $\calA$ to be finite.
\section{Experimental Results}
\label{sec:experimental_results}
This section presents an evaluation of Alg.~\ref{alg:auto_nav_fastron_ogm} using a fully actuated robot~\eqref{eq:fas_trajectory} in simulation and a car-like robot (Fig.~\ref{fig:camera_images}) with Ackermann-drive dynamics~\eqref{eq:car_dynamics} in real experiments. We use an RBF kernel with parameters $\eta = 1$, $\gamma = 2.5$ and an $R^*$-tree approximation of the score $F(\bfx)$ with $K^++K^- = 200$ nearest support vectors around the robot location $\bfs_k$ for map updating. The online training data (Sec.~\ref{subsec:ogm_with_fastron}) were generated from a grid with resolution $0.25m$. Timing results are reported from an Intel i7 2.2 GHz CPU with 16GB RAM.
\subsection{Simulations}
\label{subsubsec:map_compare}
\setlength{\abovecaptionskip}{3pt}
\setlength{\belowcaptionskip}{3pt}
\begin{figure*}[t]
\centering
\includegraphics[width=0.33\textwidth]{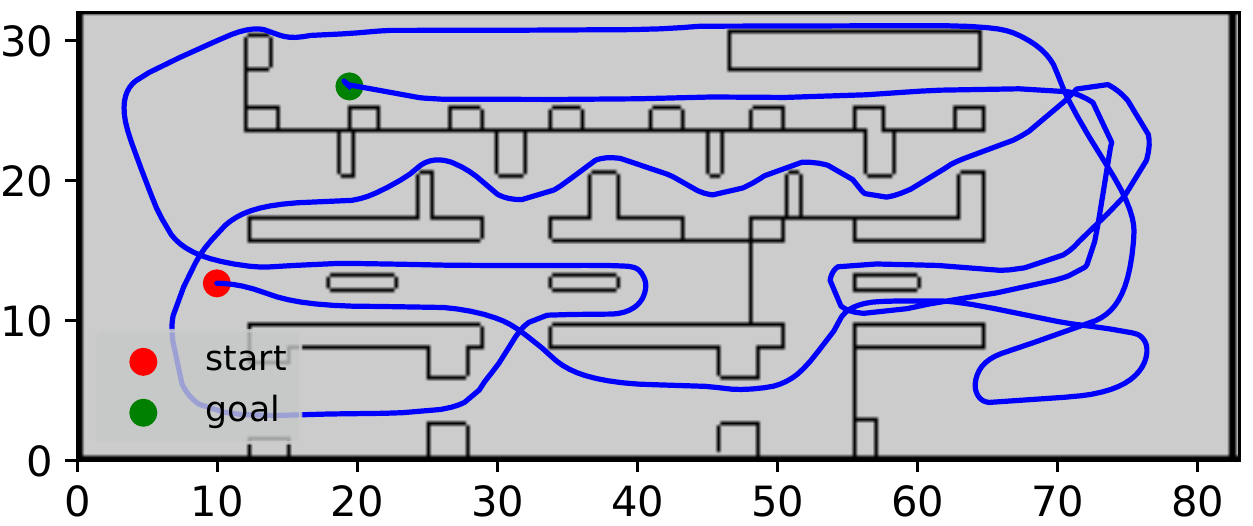}
\includegraphics[width=0.33\textwidth]{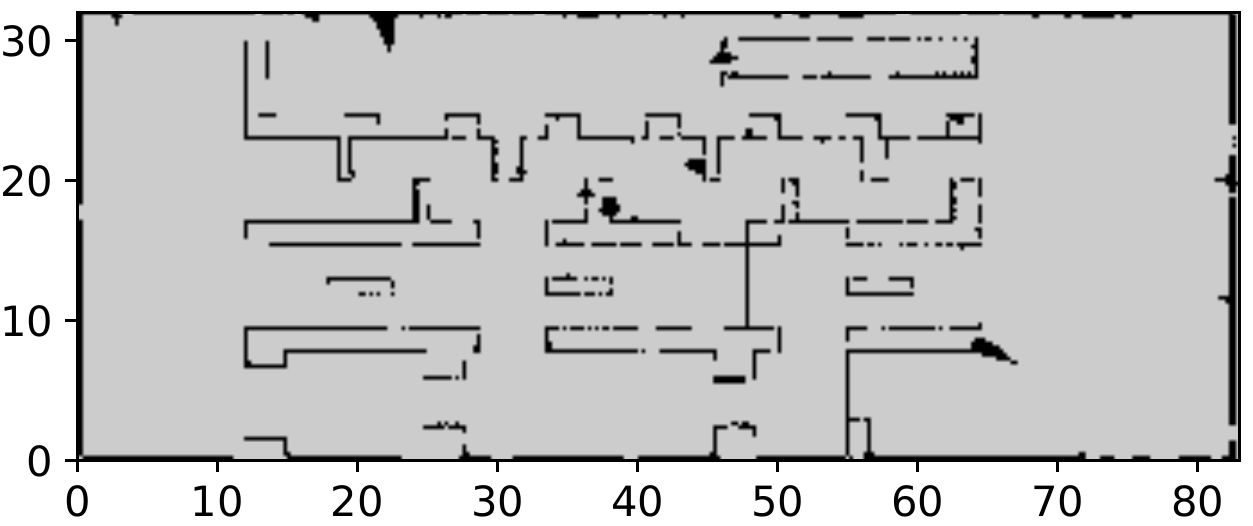}\includegraphics[width=0.33\textwidth]{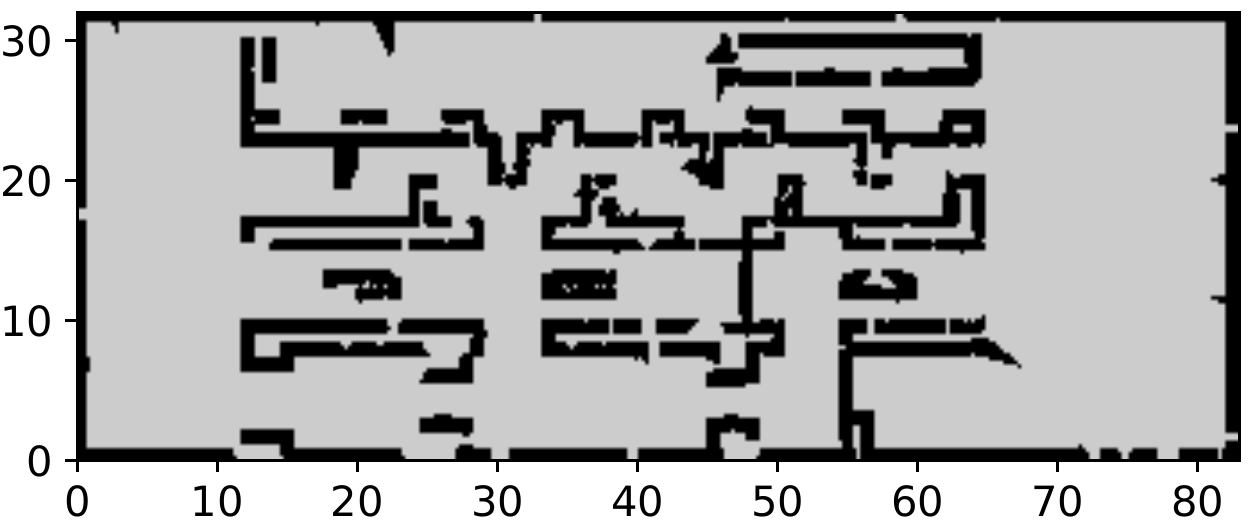}
\caption{Ground truth map and robot trajectory (left) used to generate simulated lidar scans, the occupancy map generated from our sparse kernel-based representation (middle) and the inflated map from the upper bound $U(\bfx)$ proposed in Prop.~\ref{prop:score_bounds} (right).}
\label{fig:map_comparison}
\vspace{-4mm}
\end{figure*}
\begin{figure*}[t]
\centering
\includegraphics[width=0.241\linewidth]{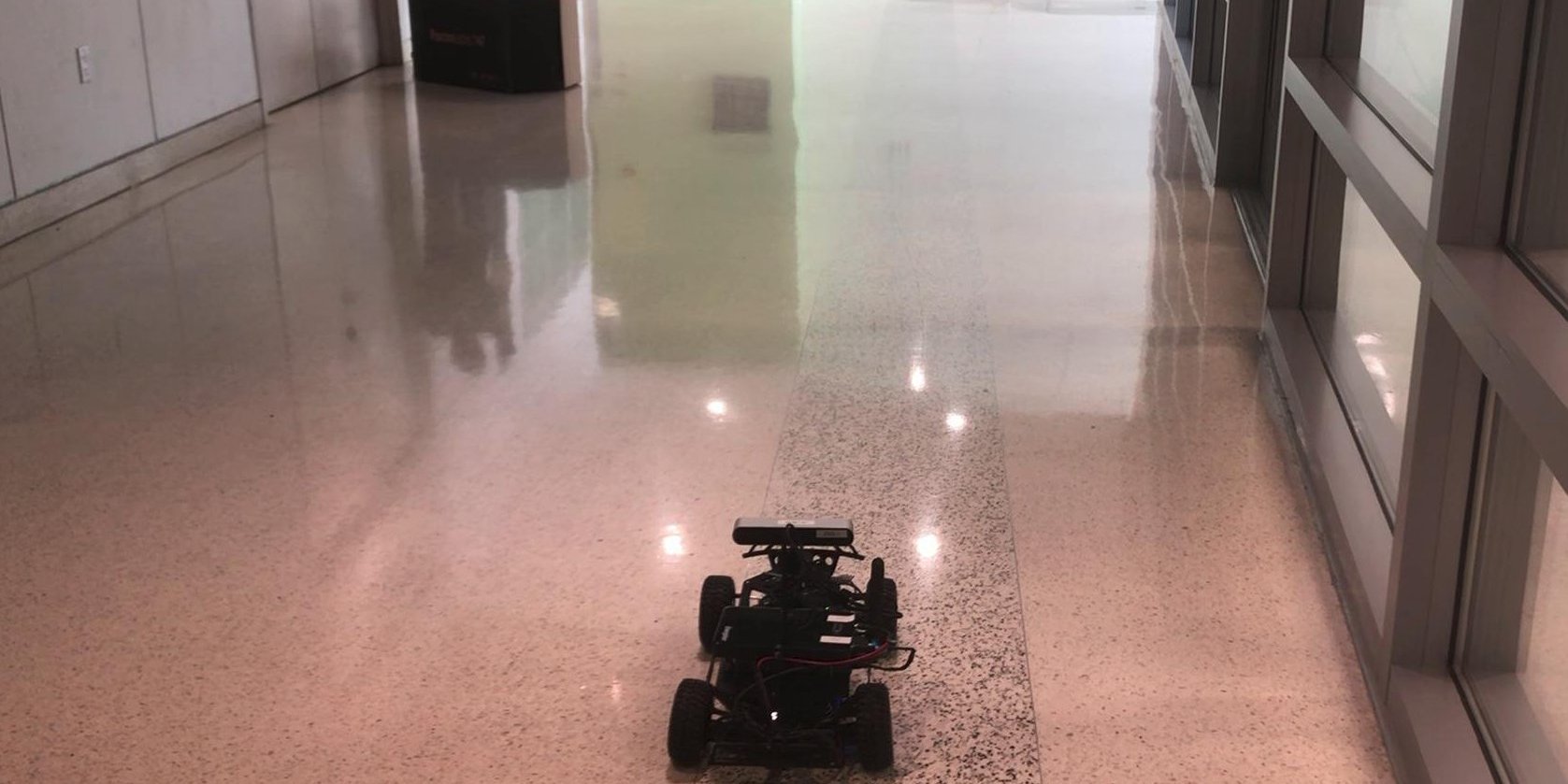}        \includegraphics[width=0.241\linewidth]{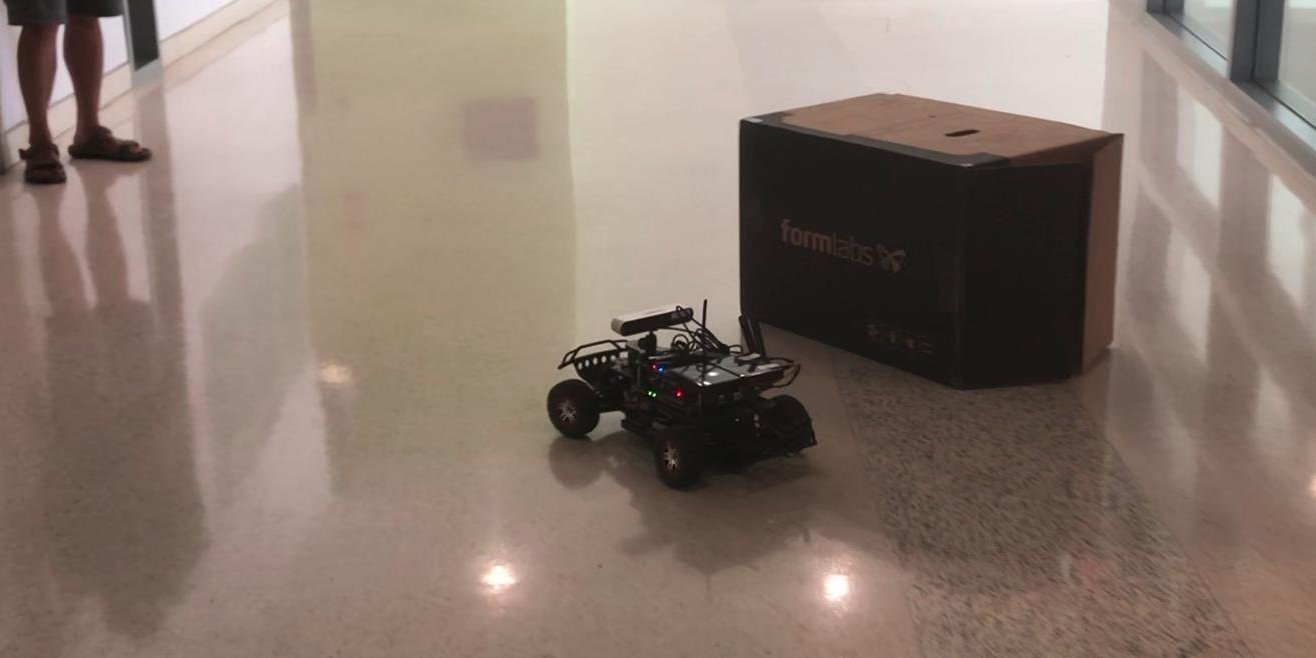}        \includegraphics[width=0.241\linewidth]{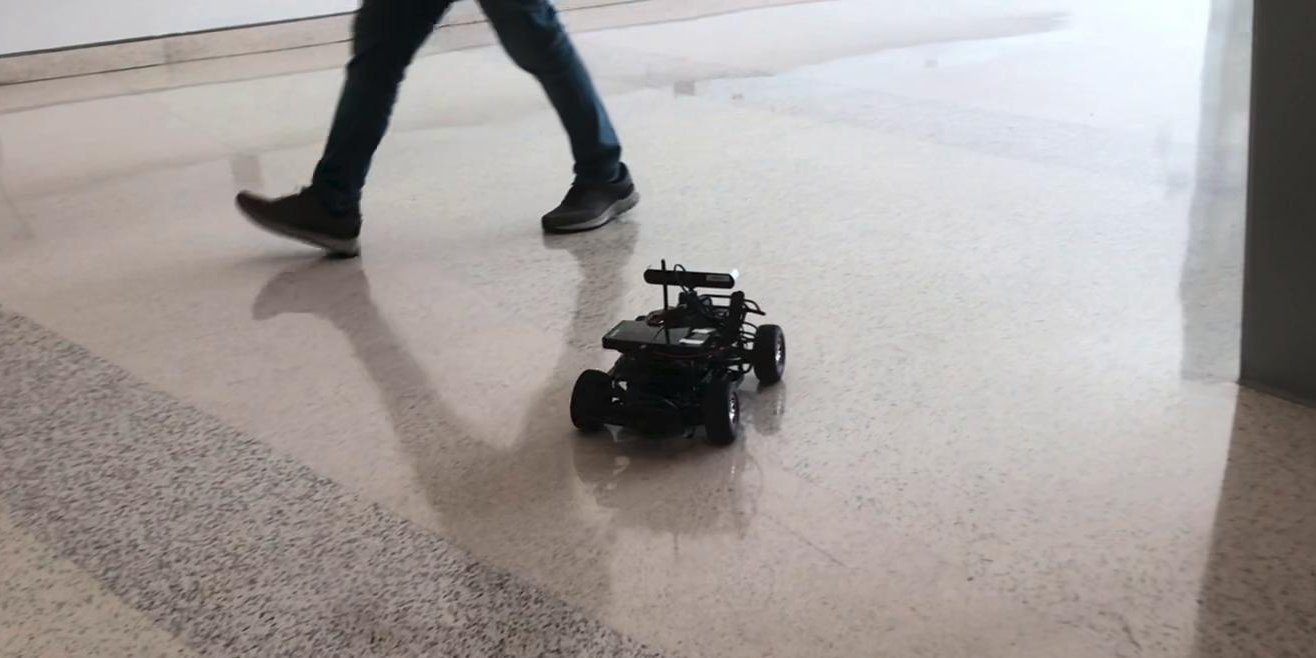}        \includegraphics[width=0.241\linewidth]{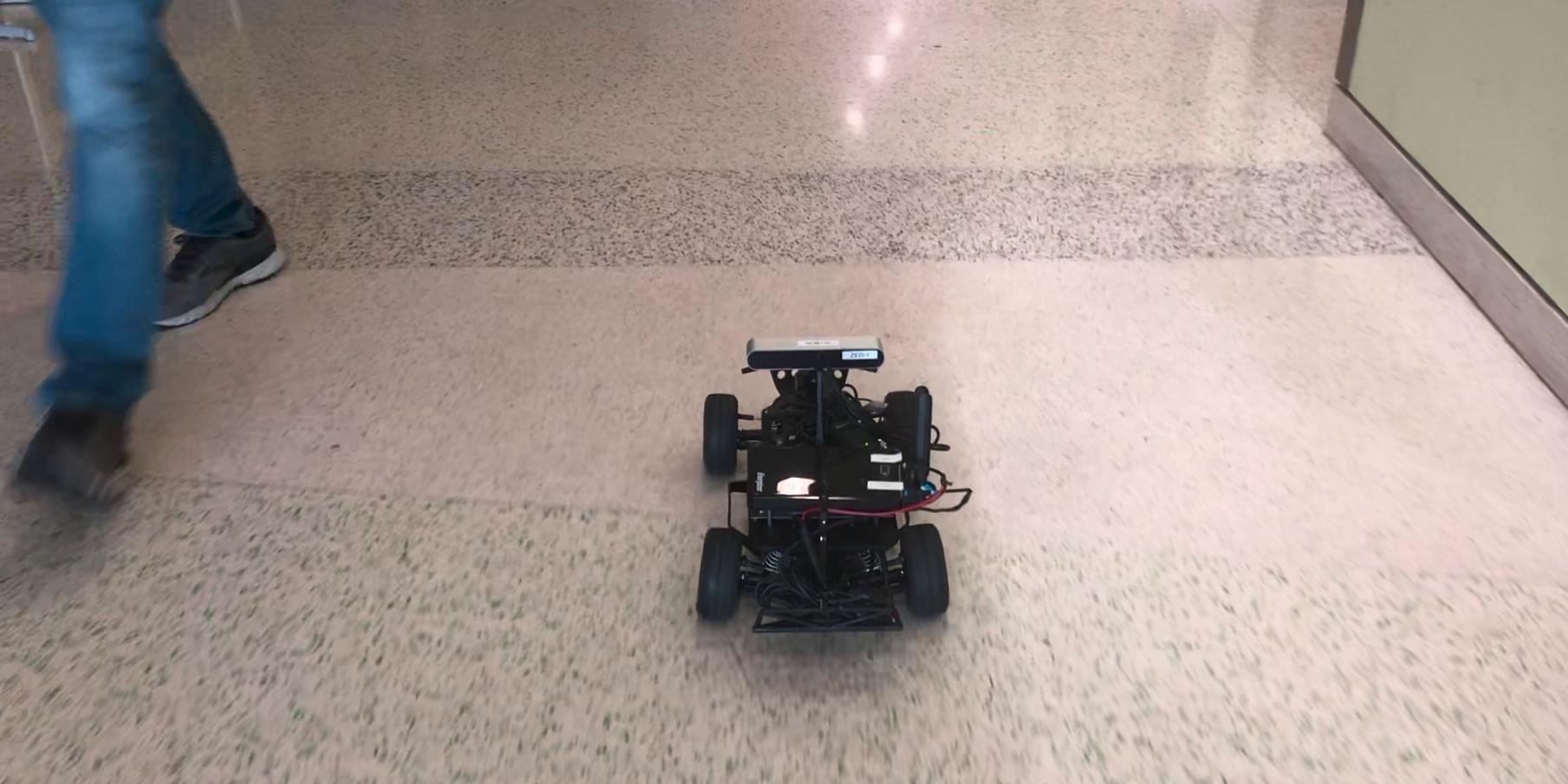}
\caption{Third person views of the autonomous Racecar robot navigating in an unknown environment with moving obstacles.}
\label{fig:camera_images}
\vspace{-4mm}
\end{figure*}
\begin{figure*}[t]
\centering
\includegraphics[width=0.325\linewidth]{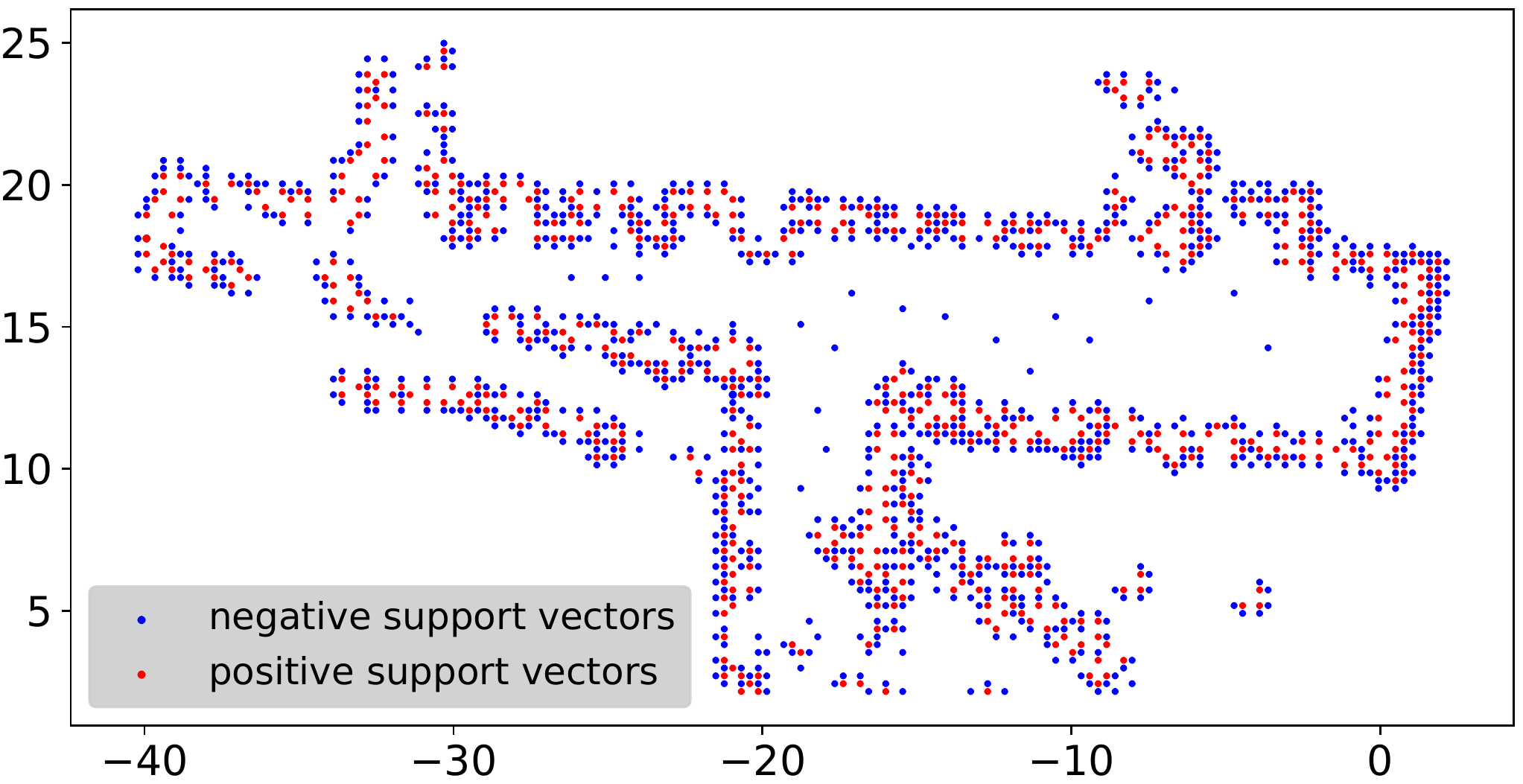}
\includegraphics[width=0.325\linewidth]{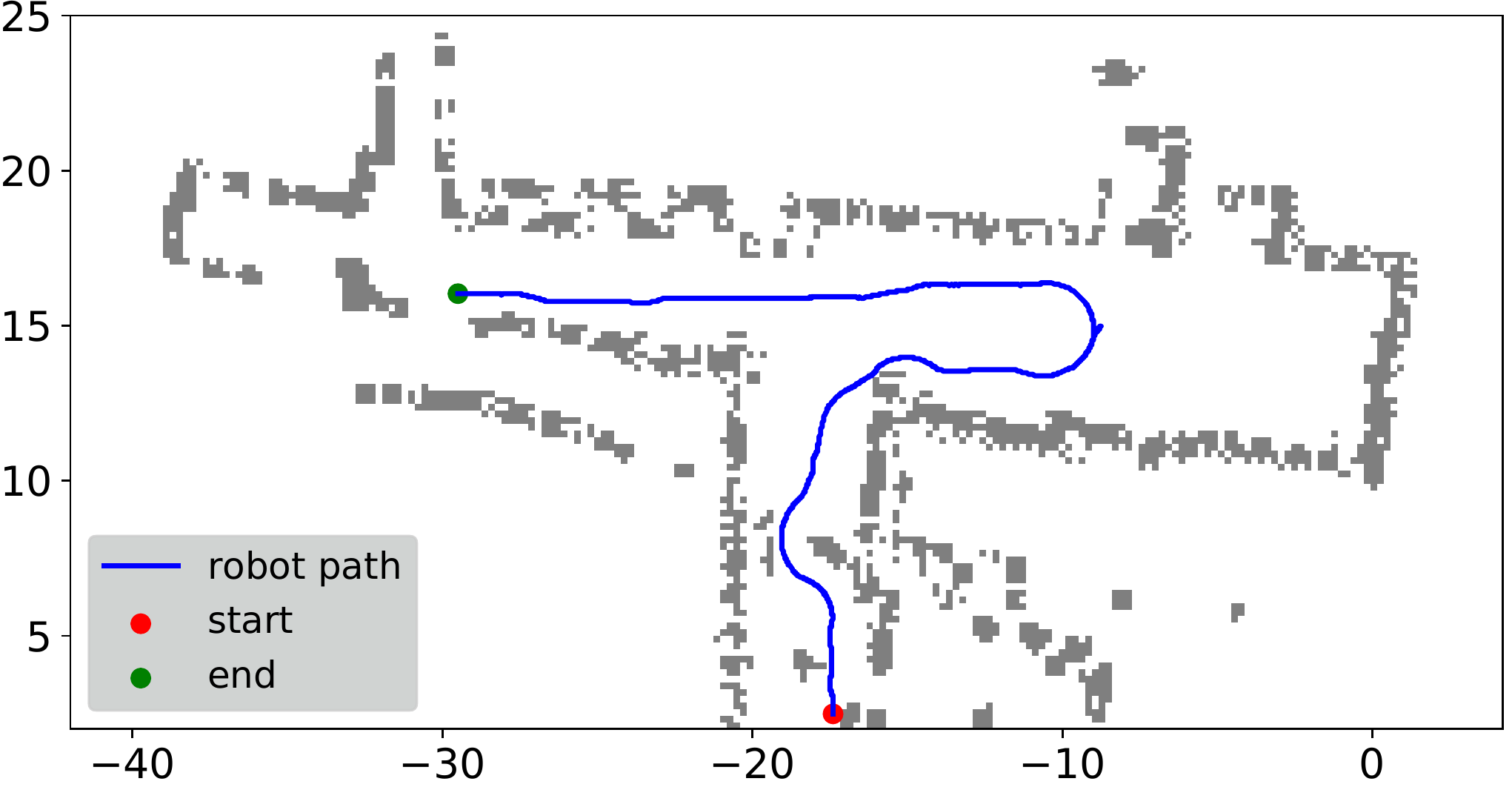}
\includegraphics[width=0.325\linewidth]{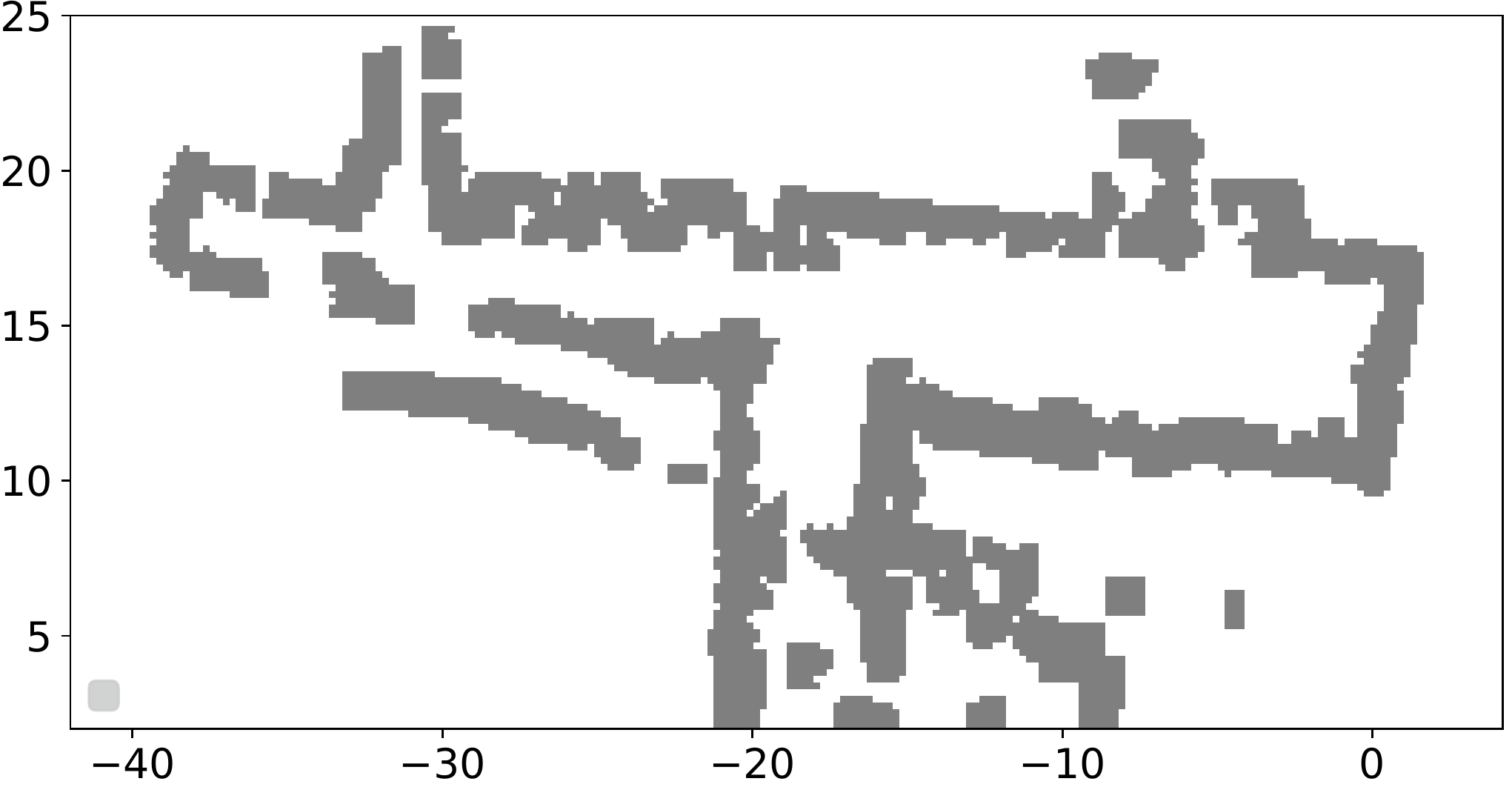}
\caption{Final $1762$ support vectors (left), kernel-based  map (middle), and inflated map (right) obtained from the real experiments.}
\label{fig:realcar_final_map}
\vspace{-4mm}
\end{figure*}
\begin{figure*}[t]
\centering
\includegraphics[width=0.33\textwidth]{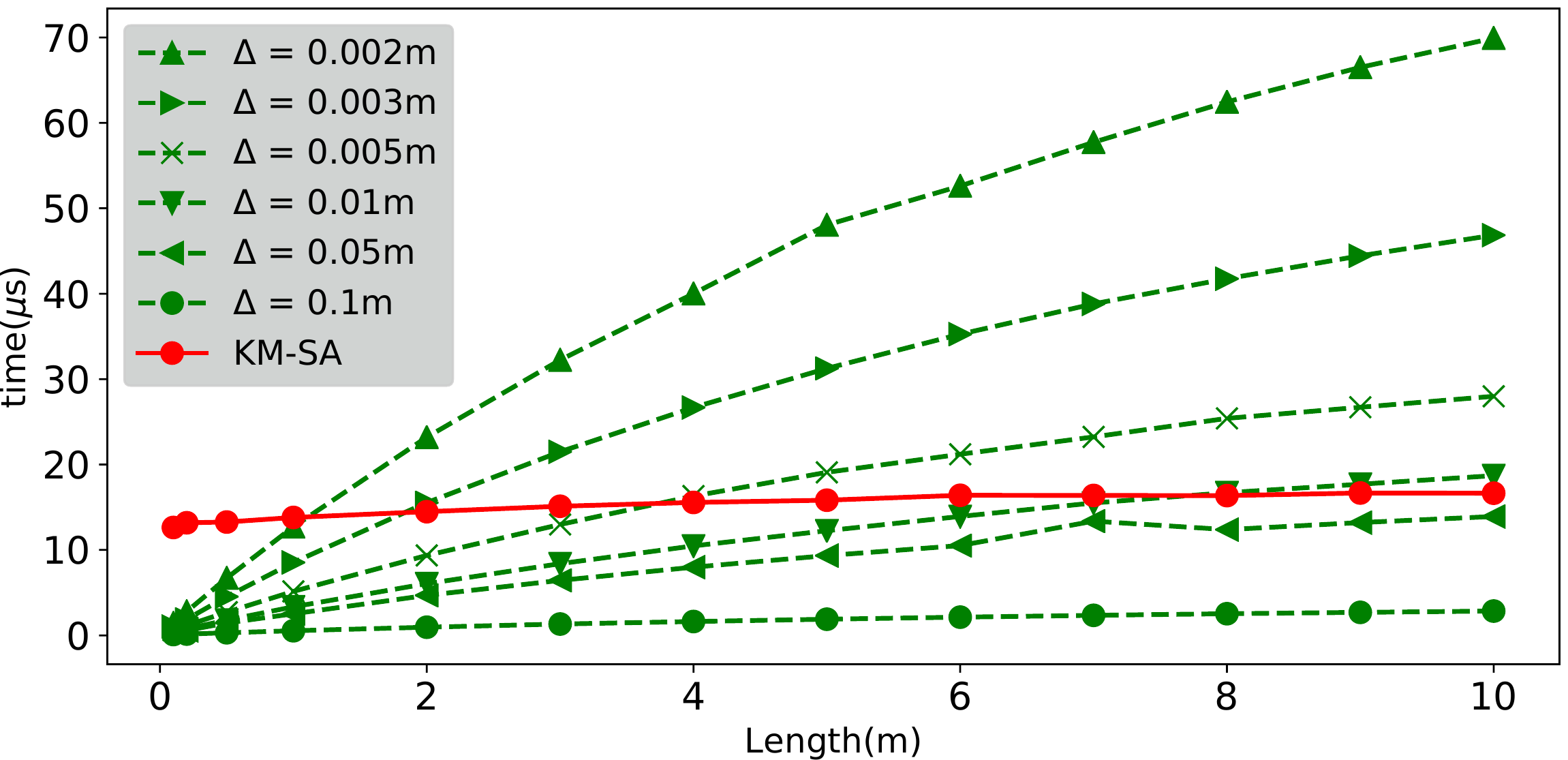}
\includegraphics[width=0.33\linewidth]{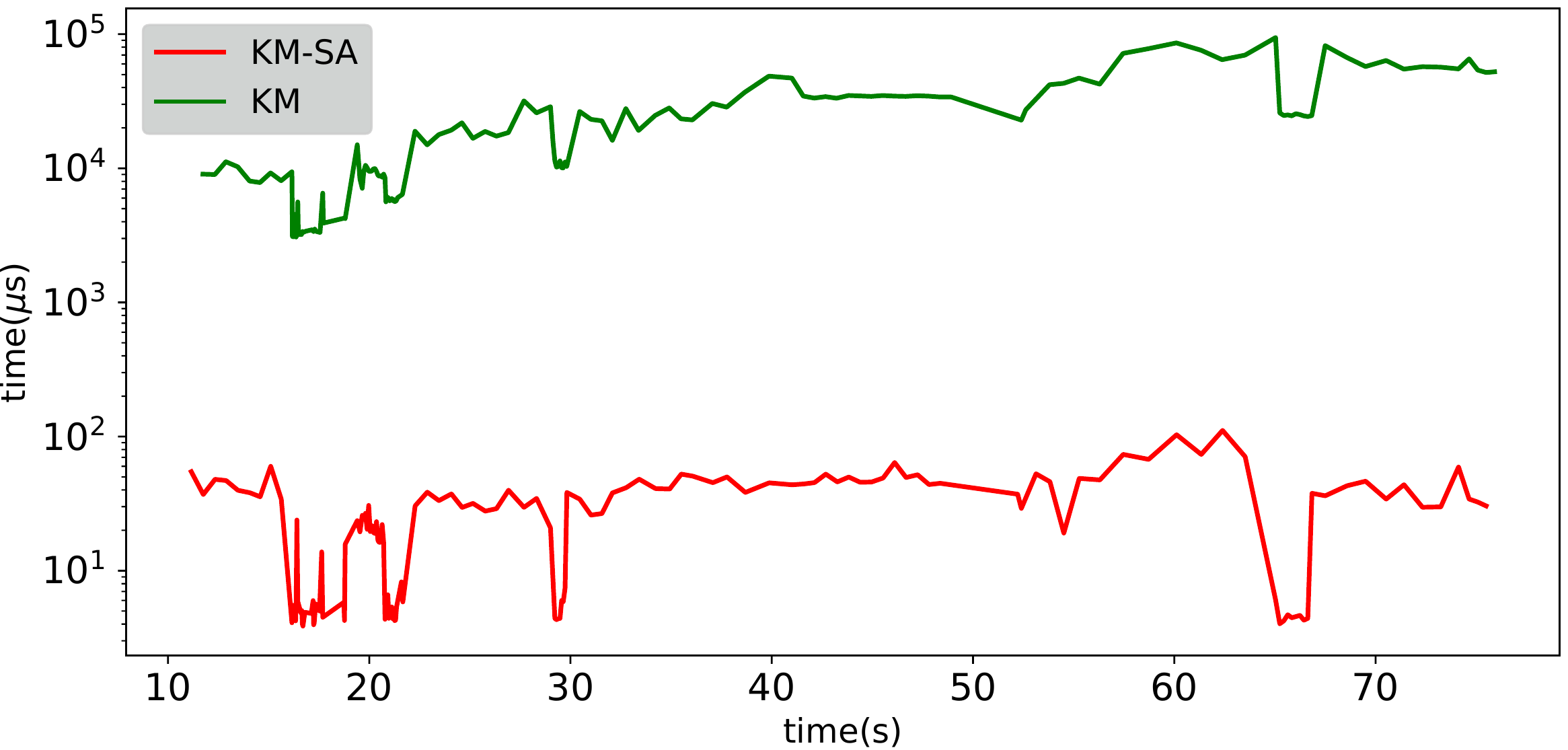}%
\includegraphics[width=0.33\linewidth]{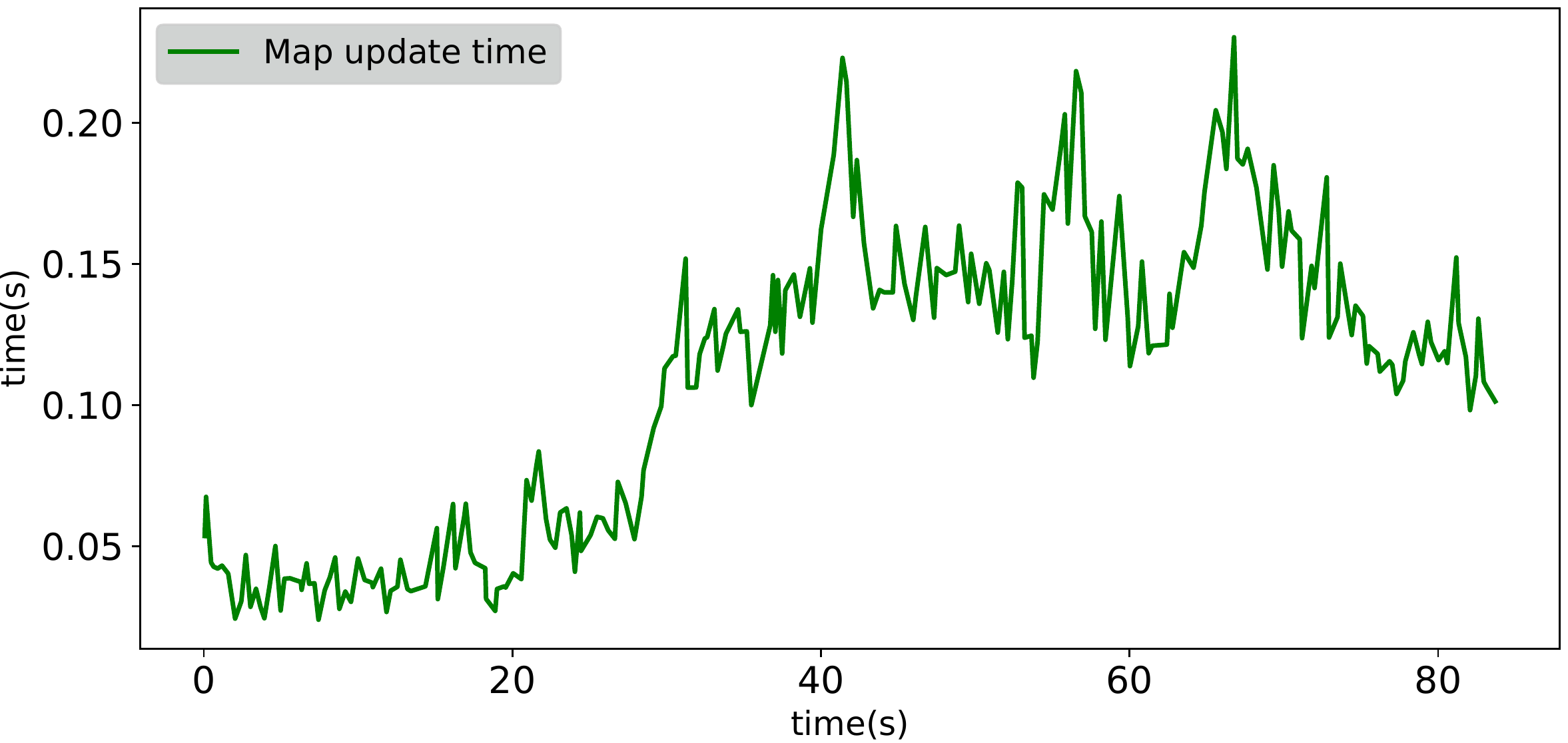}
\caption{Checking line segments in simulation (left): comparison between our method with score approximation (KM-SA) and a sampling-based method with different sampling resolution $\Delta$. Planning time per motion primitive in the real experiment (middle): comparison between our method with (KM) and without score approximation (KM-SA). Map update time for the real experiment (right).}
\label{fig:veccount_astar_map_update_time}
\vspace{-7mm}
\end{figure*}
\begin{algorithm}[h]
\caption{Autonomous Mapping and Navigation with a Sparse Kernel-based Occupancy Map}
\label{alg:auto_nav_fastron_ogm}
  \footnotesize
	\begin{algorithmic}[1]
		\Require Initial position $\bfs_0 \in \mathcal{C}_{free}$; goal region $\mathcal{C}_{goal}$; time limit $T$; prior support vectors $\Lambda_0^+$ and $\Lambda_0^-$, $t = 0$
		\While {$\bfs_t \notin \mathcal{C}_{goal}$ and $t < T$}
			\State $\bfz_t\gets$ New Depth Sensor Observation
			\State $\bf\mathcal{D}\leftarrow $ Training Data Generation$(\bfz_t, \Lambda_t^+, \Lambda_t^-)$ \Comment{Sec. \ref{subsec:ogm_with_fastron}}
			\State $\Lambda_{t+1}^+, \Lambda_{t+1}^- \leftarrow $ Incremental Fastron$(\Lambda_t^+, \Lambda_t^-, \mathcal{D}, \bfs_t)$ \Comment{Alg.~\ref{alg:fastron_model}}
			\State Path Planning$(\Lambda_{t+1}^+, \Lambda_{t+1}^-, \bfs_t, \mathcal{C}_{\text{goal}})$ \Comment{Replan via $A^*$ (\NEW{Alg.~\ref{alg:collision_checking_line}\&\ref{alg:collision_checking_curve}})}
			\State $\bfs_{t+1} = f(\bfs_t,\bfa_t)$ \Comment{Move to the first position along the path}
		\EndWhile
	\end{algorithmic}
\end{algorithm}
The accuracy and memory consumption of our sparse kernel-based map was compared with OctoMap~\cite{octomap} in a warehouse environment shown in Fig.~\ref{fig:map_comparison}. As the ground-truth map represents the work space instead of C-space, a point robot ($r=0$) was used for an accurate comparison. Lidar scan measurements were simulated along a robot trajectory shown in Fig.~\ref{fig:map_comparison} and used to build our map and OctoMap simultaneously. OctoMap's resolution was also set to $0.25m$ to match that of grid used to sample our training data from. Furthermore, since our map does not provide occupancy probability, OctoMap's binary map was used as the baseline.
%As the robot moved in the environment, it ran our mapping method and OctoMap simultaneously

Table~\ref{table:accuracy} compares the accuracy and the memory consumption of OctoMap's binary map versus our kernel-based map and inflated map (using the upper bound in Prop.~\ref{prop:score_bounds}) with and without score approximation (Sec.~\ref{subsec:ogm_with_fastron}). The kernel-based and inflated maps (with score approximation) are shown in Fig. \ref{fig:map_comparison}. The kernel-based maps and OctoMap's binary map lead to similar accuracy of $\sim 96-98\%$ compared to the ground truth map. OctoMap required a compressed octree with $12372$ non-leaf nodes with 2 bytes per node, leading to a memory requirement of $\sim24.7 kB$. As the memory consumption depends on the computer architecture and how the information on the support vectors is compressed, we provide only a rough estimate to show that our map's memory requirements are at least comparable to those of OctoMap. We stored an integer representing a support vector's location on the underlying grid and a float representing its weight. This requires 8 bytes on a 32-bit architecture per support vector. Our maps contained $1947$ and $2721$ support vectors with and without score approximation, leading to memory requirements of $15.6kB$ and $21.7kB$, respectively. 
%Note that the memory requirements of both OctoMap and our map depend on the obstacle density. 
The recall (true positive rate) reported in Table~\ref{table:accuracy} demonstrates the safety guarantee provided by our inflated map as $\sim99\%$ of the occupied cells are correctly classified.
\begin{table}[t]
\vspace{-2mm}
\centering
\begin{tabular}{ |c|c|c|c|c|c|c| } 
 \hline
 & KM &  KM-SA& IM & IM-SA & OM\\
 \hline
 Accuracy & 98.5\% & 98.5\% & 83.8\% & 83.2\% & 96.1\% \\
 \hline
 Recall & 97.4\% & 97.3\% & 99.0\% & 98.9\% & 96.8\% \\
 \hline
 Vectors/Nodes & 1947 & 2721 & 1947 & 2721 & 12372  \\
 \hline
 Storage & 15.6kB & 21.7kB & 15.6kB & 21.7kB & 24.7kB  \\ 
 \hline
\end{tabular}
\caption{Comparison among our kernel-based map (KM), KM map with score approximation (KM-SA), our inflated map (IM), IM map with score approximation (IM-SA) and OctoMap (OM)~\cite{octomap}.}
\label{table:accuracy}
%\vspace{-1mm}
\end{table}

We also compared the average collision checking time over $1000000$ random line segments using our complete method (Alg. \ref{alg:collision_checking_line} with Eq. \eqref{eq:line_curve_t_condition_tigher_bound} and $K^+ = K^- = 10$ for score approximation) and sampling-based methods with different sampling resolutions using the ground truth map. Fig.~\ref{fig:veccount_astar_map_update_time} shows that the time for sampling-based collision checking increased as the line length increased or the sampling resolution decreased. Meanwhile, our method's time was stable at $\sim 15\mu s$ suggesting its suitability for real-time applications.

\subsection{Real Robot Experiments}
\label{sec:real_experiments}
Real experiments were carried out on an $1/10$th scale Racecar robot (Fig.~\ref{fig:camera_images}) equipped with a Hokuyo UST-10LX Lidar and Nvidia TX2 computer. The robot body was modeled by a ball of radius $r = 0.25m$. Second-order polynomial motion primitives were generated with time discretization of $\tau = 1$s as described in Sec.~\ref{sec:auto_nav}. The motion cost was defined as $c(\bfs, \bfa) := (\|\bfa\|^2 + 2)\tau$ to encourage both smooth and fast motion~\cite{liu2017search}. Alg.~\ref{alg:collision_checking_curve} with Eq.~\eqref{eq:line_curve_radius_star}, $\varepsilon = 0.2$, and score approximation with $K^+ = K^- = 2$ was used for collision checking. The trajectory generated by an $A^*$ motion planner was tracked using a closed-loop controller~\cite{arslan2016exact}. The robot navigated in an unknown hallway with moving obstacles to destinations randomly chosen by a human operator. Fig.~\ref{fig:realcar_final_map} shows the support vectors, the kernel-based map, and the inflated map with score approximation for the experiment.

We observed that kernel-based mapping is susceptible to noise since the support vectors are quickly updated with newly observed data, even though it is noisy and affected by localization errors. This is caused by the kernel perceptron model not maintaining occupancy probabilities. Future work will focus on sparse generative models for occupancy maps.

The time taken by Alg.~\ref{alg:fastron_model} to update the support vectors from one lidar scan and the $A^*$ replanning time per motion primitive with and without score approximation are shown in Fig.~\ref{fig:veccount_astar_map_update_time}. Map updates implemented in Python took $0.11$s on average. To evaluate collision checking time, the $A^*$ replanning time was normalized by the number of motion primitives being checked to account for differences in planning to nearby and far goals. Without score approximation, the planning time per motion primitive was in the order of milliseconds  and increased over time as more support vectors were added. With score approximation, it was stable at $\sim 40\mu s$ illustrating the benefits of our $R^*$-tree data structure.

\section{Conclusion}
\label{sec:conclusion}
This paper proposes a sparse kernel-based mapping method for a robot navigating in an unknown environment. The method offers efficient map storage that scales with obstacle complexity rather than environment size. We developed efficient and complete collision checking for linear and polynomial trajectories in this new map representation. The experimental results show the potential of our approach to provide compressed, yet accurate, occupancy representations of large environments. The developed mapping and collision checking algorithms offer a promising avenue for safe, real-time, long-term autonomous navigation in unpredictable and rapidly changing environments. Future work will explore simultaneous localization and mapping, sparse generative models that account for occupancy probability, and active exploration and map uncertainty reduction.

{\small
\bibliographystyle{cls/IEEEtran}
\bibliography{bib/thai_ref.bib}
}

%\newpage
%\input{tex/Appendix.tex}

\end{document}